\documentclass[11pt]{article}

\usepackage[preprint]{acl}

\usepackage{times}
\usepackage{latexsym}

\usepackage[T1]{fontenc}

\usepackage[utf8]{inputenc}

\usepackage{microtype}

\usepackage{inconsolata}

\usepackage{graphicx}

\usepackage{xcolor}

\definecolor{hanblue}{HTML}{1581CC}
\definecolor{hanred}{HTML}{A61E32}
\definecolor{hanyellow}{HTML}{F8B72B}

\newcommand{\colorboxlabel}[2]{%
    \protect\begin{tikzpicture}[baseline=(char.base)]
        \protect\node[
            fill=#1, 
            shape=rectangle, 
            inner sep=0.6pt,          
            text=white, 
            font=\footnotesize\sffamily
        ] (char) {#2};
    \protect\end{tikzpicture}%
}

\usepackage[most]{tcolorbox}
\usepackage{fontawesome5} 
\definecolor{main-blue}{RGB}{56, 107, 187} 
\newtcolorbox{takeaway}[1][]{
    enhanced,
    boxrule=0pt,frame hidden, 
    borderline west={4pt}{0pt}{main-blue}, 
    colback=main-blue!10, 
    coltitle=main-blue, 
    coltext=main-blue,  
    fonttitle=\bfseries\sffamily, 
    fontupper=\small\sffamily,
    attach boxed title to top left={yshift=-10pt,xshift=10pt}, 
    before upper={\setlength{\baselineskip}{1.2\baselineskip}}, 
    boxed title style={boxrule=0pt,frame hidden,colback=main-blue!10}, 
    title={\faLightbulb\ Takeaway}, 
    #1
}

\newtcolorbox{summarybox}[1][]{
    enhanced,
    colback=white, 
    colframe=main-blue, 
    fonttitle=\bfseries\sffamily\Large,
    coltitle=white,
    title={Key Takeaway}, 
    sharp corners, 
    drop shadow, 
    #1
}

\newtcolorbox{minimalbox}[1][]{
    enhanced,
    frame hidden, 
    colback=white, 
    borderline north={1pt}{0pt}{main-blue}, 
    borderline south={1pt}{0pt}{main-blue}, 
    fonttitle=\bfseries\sffamily\color{main-blue},
    title={Takeaway:},
    attach boxed title to top left={yshift=-10pt},
    boxed title style={frame hidden, colback=white},
    left=0pt, right=0pt, 
    #1
}

\usepackage[normalem]{ulem}
\usepackage{makecell} 
\usepackage{colortbl} 
\usepackage{cleveref}
\usepackage{scalerel} 
\usepackage{pifont}
\usepackage{tcolorbox}
\usepackage[framemethod=TikZ]{mdframed}

\crefname{appendix}{Appendix}{Appendices}
\Crefname{appendix}{Appendix}{Appendices}

\definecolor{hanred}{RGB}{165,30,50}
\newcommand{\hanredcell}[1]{\cellcolor{hanred!60}{#1}}
\definecolor{sudisblue}{RGB}{93, 177, 251}
\definecolor{sudisred}{RGB}{252,106,108}
\hypersetup{
    breaklinks,
    citecolor=main-blue,
    colorlinks=true,
    linkcolor=main-blue,
    urlcolor=sudisred
}

\usepackage{booktabs}       %
\usepackage{amsfonts}       %
\usepackage{nicefrac}       %
\usepackage{enumitem}
\usepackage{colortbl}
\usepackage[edges]{forest}
\usepackage{adjustbox}
\usepackage{xcolor}
\usepackage{ragged2e}

\usepackage{multirow}
\usepackage{graphicx}
\usepackage[table]{xcolor}
\usepackage{xspace}
\usepackage{makecell}

\newcommand{\method}{\textsc{LVSpec}\xspace}

\newcommand{\gray}[1]{\cellcolor{gray!10}\textcolor{gray}{#1}}

\usepackage{amsmath}
\usepackage{amssymb}
\usepackage{amsthm}
\usepackage{mathtools}
\theoremstyle{plain}
\newtheorem{theorem}{Theorem}[section]
\newtheorem{proposition}[theorem]{Proposition}

\theoremstyle{definition}

\theoremstyle{remark}
\newtheorem{remark}[theorem]{Remark}

\usepackage{xcolor}
\usepackage{tikz}
\usepackage{pdfrender}
\usepackage{xspace}
\usepackage{hyperref}

\newsavebox{\gradienttextbox}
\newcommand{\gradienttext}[3]{%
  \begingroup
  \sbox{\gradienttextbox}{#3}%
  \mbox{%
    \pdfsave
    \pdfrender{TextRenderingMode=Clip}%
    \rlap{\usebox{\gradienttextbox}}%
    \rlap{%
      \raisebox{-\dp\gradienttextbox}{%
        \begin{tikzpicture}[x=1pt,y=1pt,inner sep=0pt,outer sep=0pt]
          \shade[left color=#1, right color=#2]
            (0,0) rectangle (\wd\gradienttextbox,\ht\gradienttextbox+\dp\gradienttextbox);
        \end{tikzpicture}%
      }%
    }%
    \pdfrestore
    \phantom{\usebox{\gradienttextbox}}%
  }%
  \endgroup
}
\DeclareRobustCommand{\methodcolor}{%
  \gradienttext{hanred}{sudisblue}{\textsc{See the Forest for the Trees}}%
}

\title{\texorpdfstring
  {\methodcolor: Loosely Speculative Decoding via Visual-Semantic Guidance for Efficient Inference of Video LLMs}
}

\author{
\textbf{Yicheng Ji}$^{1\dagger}$ \quad
\textbf{Jun Zhang}$^{1\dagger}$ \quad
\textbf{Jinpeng Chen}$^{2}$ \\
\textbf{Cong Wang}$^{1}$ \quad
\textbf{Lidan Shou}$^{1}$ \quad
\textbf{Gang Chen}$^{1}$ \quad
\textbf{Huan Li}$^{1*}$ \\
$^{1}$ZJU \quad $^{2}$BUPT \\
\vspace{0.5em}
{\href{https://github.com/zju-jiyicheng/LVSpec}{\color{hanred}{\texttt{https://github.com/zju-jiyicheng/LVSpec}}}}
}

\begin{document}
\maketitle

\let\oldthefootnote\thefootnote
\renewcommand{\thefootnote}{}

\footnotemark

\footnotetext{{$\dagger$} Equal contribution \qquad
{$*$} Corresponding author}

\begin{abstract}
Video Large Language Models (Video-LLMs) excel in video understanding but suffer from high inference latency during autoregressive generation. Speculative Decoding (SD) mitigates this by applying a draft-and-verify paradigm, yet existing methods are constrained by rigid \textit{exact-match} rules, severely limiting the acceleration potential.
To bridge this gap, we propose \method, the first training-free loosely SD framework tailored for Video-LLMs.
Grounded in the insight that generation is governed by sparse visual-relevant anchors (mandating strictness) amidst abundant visual-irrelevant fillers (permitting loose verification), \method employs a lightweight visual-relevant token identification scheme to accurately pinpoint the former.
To further maximize acceptance, we augment this with a position-shift tolerant mechanism that effectively salvages positionally mismatched but semantically equivalent tokens.
Experiments demonstrate that \method achieves high fidelity and speed: it preserves \textbf{>99.8\%} of target performance while accelerating \texttt{Qwen2.5-VL-32B} by \textbf{2.70}$\times$ and \texttt{LLaVA-OneVision-72B} by \textbf{2.94}$\times$. Notably, it boosts the mean accepted length and speedup ratio by \textbf{136\%} and \textbf{35\%} compared to SOTA training-free SD methods for Video-LLMs.
\end{abstract}
\section{Introduction}

Video large language models (Video-LLMs)~\citep{DBLP:conf/nips/LiuLWL23a,DBLP:journals/corr/abs-2308-12966,DBLP:conf/cvpr/LiuLLL24,DBLP:conf/emnlp/LinYZCNJ024,DBLP:journals/corr/abs-2407-07895,DBLP:journals/corr/abs-2502-13923,DBLP:journals/tmlr/0080ZGZ00ZZL0L25,DBLP:journals/corr/abs-2509-23661} show strong performance in video understanding tasks such as video captioning and video question answering. However,  inference latency remains a key bottleneck due to autoregressive backbones, hindering practical deployment.
For instance, to process a 100-second high-resolution video at 32 FPS, \texttt{Qwen2.5-VL}~\cite{DBLP:journals/corr/abs-2502-13923} by default encodes over one million visual tokens. The long sequence of tokens, together with the full model parameters, requires autoregressive access during decoding, leading to a memory-bound bottleneck and increasing end-to-end latency.

\begin{figure}[!t]
  \includegraphics[width=\columnwidth]{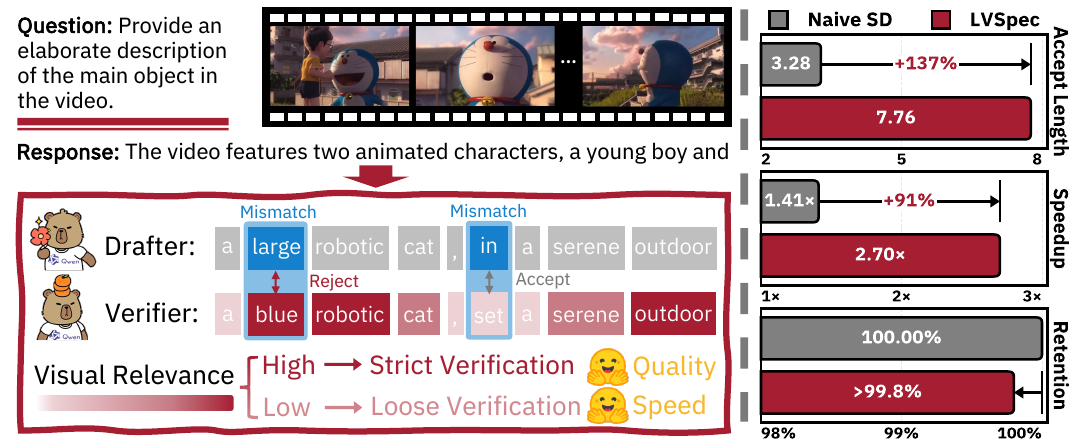}
  \caption{\method perform strict verification for visual-relevant tokens and loose verification for visual-irrlevant ones, boosting efficiency while preserving performance.}
  \label{fig:intro}
\end{figure}

To achieve lossless decoding-time acceleration, \textit{Speculative Decoding (SD)} offers a promising alternative. 
It leverages a lightweight \textit{draft model} to propose multiple draft tokens, which are then verified in parallel by a \textit{target model}.
Recent works~\citep{ji2025specvlm,DBLP:journals/corr/abs-2509-15235,DBLP:journals/corr/abs-2505-10526,DBLP:journals/corr/abs-2509-11961,DBLP:journals/corr/abs-2510-22641,DBLP:journals/corr/abs-2509-23928,DBLP:journals/corr/abs-2509-11815} have explored SD for large vision-language models, and SpecVLM~\cite{ji2025specvlm} has pioneered the application of SD to Video-LLMs in a training-free manner. However, existing SD frameworks are fundamentally constrained by their \textit{exact-match} rule: a draft token is accepted only if it is identical to the target model's generation. Such a strictly SD scheme forces the unfavorable rejection of many \emph{semantically aligned} tokens, leading to limited mean accepted lengths and speedup ratios. 

For large language models (LLMs), loosely speculative decoding has evolved from costly classifiers~\cite{BachmannAPGSDST25} to training-free heuristics like \textsc{FLy}~\cite{li2025looselysd}. Specifically, \textsc{FLy} accepts a mismatched token as a semantically equivalent alternative only when model uncertainty is high (gauged by entropy) and the divergence proves transient within a deferred window. Consequently, this approach achieves highly generalizable acceleration with minimal overhead.
However, naively adapting these advanced loosely SD tailored for LLMs to Video-LLMs suffers from a critical ``visual blindness'' limitation. By relying solely on linguistic priors, such as output entropy, these methods apply an indiscriminate verification standard, treating visually critical descriptors and trivial syntactic fillers with the same logic. 
We posit that for Video-LLMs, generation quality is defined not by rote lexical matching, but by the fidelity of articulating key visual information. This intuition raises a pivotal question:

\textit{Are all decoded tokens equally critical for visual understanding, thus mandating uniform verification rigor?} 

Our answer is negative. We assert that visual grounding must dictate verification strictness, as shown in~\Cref{fig:intro}. On one hand, tokens strongly tied to visual content (e.g., ``blue'') act as the anchors of understanding; treating them loosely (as text methods might do for high-entropy synonyms) risks factual hallucinations. Therefore, they mandate strict validation. On the other hand, visually agnostic tokens (e.g., ``set'') contribute minimally to fidelity; maintaining strict matching here (as text methods do for low-entropy non-synonyms) is computationally wasteful. Consequently, they permit adaptive and relaxed matching. Driven by this insight, we introduce \method, a novel loosely SD framework tailored for Video-LLMs that utilizes visual-semantic guidance.

\method introduces a \textbf{visual-relevant text token identification scheme} in~\Cref{section:method_1} that computes the visual similarity of text tokens at each decoding step, which accurately identifies the key text tokens that carry the richest visual cues. Subsequently, we adopt a \textbf{visual–semantic guided loose verification} in~\Cref{section:method_2} that selectively token matching constraint based on the above quantified visual relevance. 
Moreover, we introduce a \textbf{position shift-tolerant mechanism} in~\Cref{section:method_3} that relaxes acceptance for tokens that appear in the draft sequence but mismatch due to positional shifts, thereby further improving speedup without reducing the conveyed visual information.

To summarize, our main contributions are:

\begin{itemize}[itemsep=0pt, topsep=0pt, leftmargin=19pt]
    \item[(1)] \textit{Novel Discovery and Theoretical Potential}: We demystify the dual nature of visual dependence: generation is governed by sparse yet critical visual-relevant anchors amidst a sea of visual-irrelevant linguistic filler.
    Capitalizing on this, we provide a rigorous theoretical analysis to quantify the acceleration potential, guaranteeing strict verification for anchors while aggressively relaxing the rest.

    \item[(2)] \textit{Visual-Semantic Guided Loosely SD}: To unleash this potential, we propose \method, the first loosely SD tailored for Video-LLMs. By employing a pinpoint and lightweight visual-relevant token identification and loose verification scheme, augmented by a position-shift tolerant mechanism, \method shatters the rigid “exact-match” barrier, effectively salvaging rejected mismatches into valid semantic equivalents in a training-free manner.

    \item[(3)] \textit{Extensive Experiments and Analysis}: Comprehensive evaluations across four video understanding benchmarks show that \method is both high-fidelity and rapid: it preserves \textbf{>99.8\%} of target performance while accelerating \texttt{Qwen2.5-VL-32B} by \textbf{2.70}$\times$ and \texttt{LLaVA-OneVision-72B} by \textbf{2.94}$\times$.
    Notably, it boosts the mean accepted length and speedup ratio by a remarkable \textbf{136\%} and \textbf{35\%} compared to SOTA training-free SD for Video-LLMs, validating our visual-centric design.
\end{itemize}

\section{Preliminary Study}
In this section, we investigate the decoding dynamics of Video-LLMs to motivate our proposed approach. We begin by presenting empirical evidence in~\Cref{sec:empirical_sparsity}, which reveals the \textbf{dual nature} of visual semantic dependence: it is \textbf{sparse in distribution yet decisive for semantic grounding}. Building on this observation, we provide a rigorous theoretical analysis in~\Cref{sec:theoretical_analysis} to \textbf{quantify the theoretical acceleration potential} of exploiting this sparsity, establishing the mathematical foundation for breaking the bottleneck of strictly SD.

\begin{figure*}[t]
  \centering
  \includegraphics[width=\textwidth]{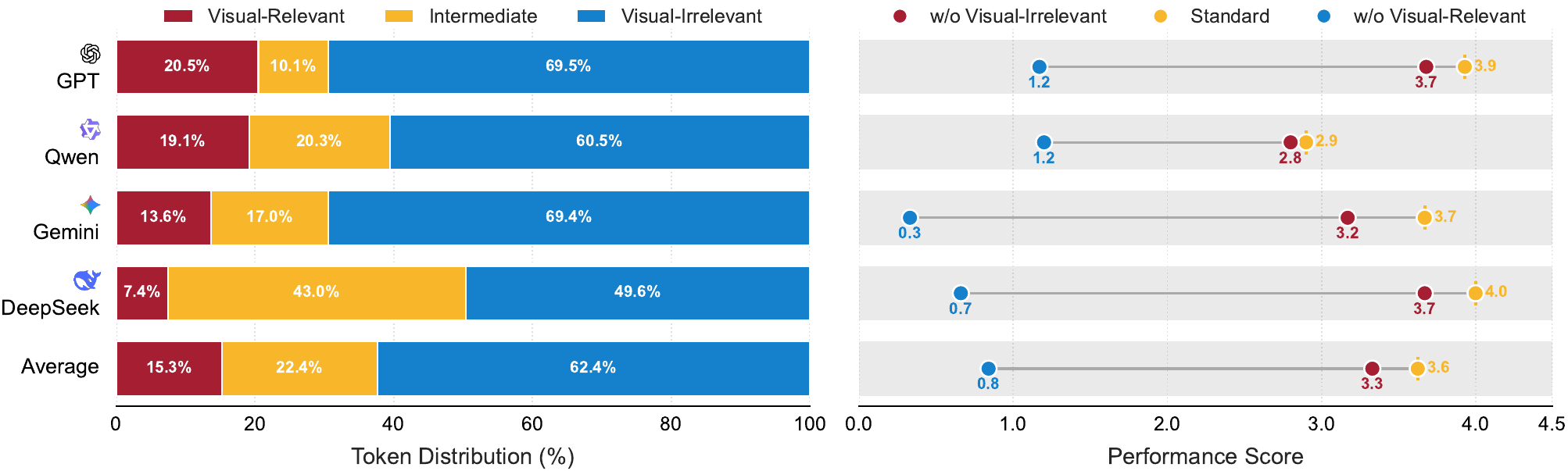}
  \caption{\textbf{Left:} (a) The distribution of visual-relevant and visual-irrelevant tokens.
  \textbf{Right:} (b) LLMs evaluation on Video Detail Caption benchmark. Visual-relevant tokens dominate the output quality.
  }
  \label{fig:observation}
\end{figure*}


\subsection{Sparse yet Critical Visual Semantics}
\label{sec:empirical_sparsity}

To investigate the underlying mechanism of visual understanding in Video-LLMs, we conducted an \textbf{oracle analysis}. We term this an ``oracle'' study because it relies on retrospective classification by a superior model (e.g.,~\texttt{GPT-4o}) using full context information that is theoretically inaccessible during real-time decoding. We utilized four advanced production-grade models (\texttt{GPT-4o}, \texttt{Gemini-3-Pro}, \texttt{DeepSeek-V3}, and \texttt{Qwen3-Max}) on the Video Detail Caption (VDC)~\cite{DBLP:conf/iclr/ChaiSDMMBHXM25} benchmark to address two fundamental questions regarding token distribution and importance.

\noindent \textbf{Question 1:} \textit{Are visual semantics uniformly distributed across the generated sequence or are they concentrated in specific units?}

To address this, we collect generation responses of \texttt{Qwen2.5-VL-32B} on VDC and prompt the four production-grade models to categorize each generated token based on the complete sentence context. We established a taxonomy where 
``Visual-Relevant'' 
refers to tokens directly corresponding to visible scene elements, 
``Visual-Irrelevant'' 
denotes function or grammatical words, abstract terms, conjunctions, and 
``Intermediate'' 
covers other terms\footnote{
\Cref{fig:prompt_template} in~\Cref{app:oracle} shows the prompt template.}.
As shown in~\Cref{fig:observation}, the distribution of these categories reveals a consistent pattern:

\noindent \textbf{Observation 1 (Sparsity):} \textit{Only a small fraction of generated tokens are directly tied to visual elements, while the vast majority consist of functional, grammatical, or abstract fillers that are either completely irrelevant or only loosely associated with the visual content.}

Having established the distributional sparsity of visual tokens, we proceed to examine their contribution to the overall generation quality.

\noindent \textbf{Question 2:} \textit{Are all decoded tokens equally critical for visual understanding, or does their impact on quality vary significantly?}

We investigate this by performing a targeted ablation across all four models, selectively pruning the top-50 most versus least visually relevant tokens from the normalized outputs. The subsequent evaluation (shown in~\Cref{fig:observation} (b)) demonstrates a sharp contrast: removing visually relevant tokens leads to a substantial collapse in generation quality, whereas removing high-frequency irrelevant tokens results in no noticeable performance drop. This leads to our second observation:

\noindent \textbf{Observation 2 (Criticality):} \textit{The semantic integrity of the generation relies heavily on the presence of visually relevant tokens; functional and irrelevant tokens, despite their high frequency, contribute negligibly to the core visual understanding.}


We distill these empirical observations into the following hypothesis:
\begin{tcolorbox}[colback=gray!10, colframe=gray!20, arc=2mm, boxrule=0.5pt, left=1mm, right=1mm, top=1mm, bottom=1mm]
\textbf{Visual Anchor Sparsity Hypothesis:} Generation quality is governed by a sparse set of ``anchor'' tokens, while dense irrelevant ``filler'' tokens carry minimal information density.
\end{tcolorbox}
\noindent This insight motivates our verification strategy: enforcing strict verification on anchors while relaxing verification for the remaining fillers aggressively~\footnote{See \Cref{fig:case_study_oracle} in \Cref{app:oracle} for a detailed case analysis.}.

\subsection{Theoretical Benefits of Loose Verification}
\label{sec:theoretical_analysis}

We first formalize the speculative decoding (SD) process. Given a target model (verifier) $\mathcal{M}_{t}$ and a draft model (drafter) $\mathcal{M}_{d}$, in each decoding step $s$, the draft model $\mathcal{M}_{d}$ autoregressively produces $K$ draft tokens $\{\hat{y}_k^s\}_{k=1}^K$, which are then verified by the target model in parallel:
\begin{equation}
\{{y}_k^s\}_{k=1}^K = \mathcal{M}_{t}([y^0_: \cdots y^{s-1}_:, \{\hat{y}_k^s\}_{k=1}^K]),
\end{equation}
where $\{{y}_k^s\}_{k=1}^K$ denotes the verified sequence and $y^{0}_: \cdots y^{s-1}_:$ represent the context history. Following~\citet{li2025looselysd}, we define the verification strategy $\triangledown$ to determine whether the draft sequence matches the verified sequence at each position ($1$ if matched, else $0$). The accepted length $\tau_s^{\triangledown}$ (a random variable) for the current step $s$ is defined using the longest common prefix logic:
\begin{equation}
\tau_s^{\triangledown} \coloneqq \sum\nolimits_{k=1}^{K} \left( \prod\nolimits_{j=1}^{k} \triangledown(\hat{y}_j^s, y_j^s) \right).
\end{equation}
For notational brevity, we assume the decoding process is statistically stationary across steps, rendering the expected accepted length invariant to $s$ \footnote{
This simplification does not alter our findings.
}. Thus, we drop the index and denote the expectation as $\mathbb{E}[\tau^{\triangledown}]$. Let $T_{t}$ and $T_{d}$ be the latency of $\mathcal{M}_{t}$ and $\mathcal{M}_{d}$ per step, respectively. The overall speedup ratio is defined as:
\begin{equation}
\label{eq:speedup}
\text{speedup} \coloneqq \frac{\mathbb{E}[\tau^{\triangledown}] \cdot T_{t}}{K \cdot T_{d} + T_{t}^{K}}.
\end{equation}
Here $T_t^K$ denotes the latency for $\mathcal{M}_{t}$ to verify $K$ draft tokens in parallel. Based on~\Cref{eq:speedup}, we analyze the behavior of $\mathbb{E}[\tau^{\triangledown}]$ under different verification strategies. Specifically, we \textbf{contrast the theoretical limits} of standard strict verification ($\mathbb{E}[\tau_{\text{strict}}^{\triangledown}]$) \textbf{against the scaling capabilities} of loose verification, quantified by the ratio $\frac{\mathbb{E}[\tau^{\triangledown}_{\text{loose}}]}{\mathbb{E}[\tau^{\triangledown}_{\text{strict}}]}$.

\begin{proposition}[Acceptance Bound of Strict Verification]
\label{prop:strict_bound}
Let $\alpha$ denote the draft alignment accuracy and $\epsilon \coloneqq 1 - \alpha$ be the failure rate. The expected accepted length for strict verification is theoretically bounded by the inverse of the failure rate (proof in~\Cref{app:proof_proposition_2.2}).\footnote{Based on the standard i.i.d. assumption in~\Cref{app:assumption}.}
\end{proposition}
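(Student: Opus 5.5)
The argument rests on the i.i.d. assumption referenced in the footnote. Under strict verification, each indicator $\triangledown(\hat{y}_j, y_j)$ is Bernoulli with success probability $\alpha$. These indicators are independent across positions $j=1,\dots,K$, and the draft token at a position is evaluated against the target token conditioned on an accepted prefix. I will therefore write $\mathbb{E}[\tau^{\triangledown}_{\text{strict}}]$ by linearity of expectation applied to the defining sum. The $k$-th summand is the product $\prod_{j=1}^{k}\triangledown(\hat{y}_j,y_j)$, which equals $1$ exactly when the first $k$ drafts all match. By independence its expectation is $\alpha^k$. This gives
\begin{equation*}
\mathbb{E}[\tau^{\triangledown}_{\text{strict}}] = \sum_{k=1}^{K}\alpha^k = \frac{\alpha(1-\alpha^K)}{1-\alpha}.
\end{equation*}

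The bound then follows from elementary estimates. Since $0\le\alpha<1$, the factor $\alpha(1-\alpha^K)$ is at most $1$ (indeed at most $\alpha<1$). Hence
\begin{equation*}
\mathbb{E}[\tau^{\triangledown}_{\text{strict}}] \le \frac{\alpha}{1-\alpha} < \frac{1}{\epsilon}.
\end{equation*}
Equivalently, sending $K\to\infty$ shows that the geometric series saturates at $\alpha/\epsilon$ no matter how many draft tokens are proposed. I will also remark that the bound is essentially tight for large $K$. If the convention counts the bonus token emitted by the target model, the expectation becomes $1+\sum_{k=1}^K\alpha^k = (1-\alpha^{K+1})/\epsilon \le 1/\epsilon$, which matches the stated bound exactly; I will present it in this form so that the statement reads cleanly.

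The main obstacle is stating the independence assumption precisely. The match events are really conditional on the accepted prefix, so the assumption must say that $\Pr[\text{match at } j \mid \text{matches at } 1,\dots,j-1]=\alpha$. Phrased this way, the product's expectation telescopes to $\alpha^k$ by the chain rule even without full independence. I would state it in this conditional form to keep the proof honest, and then the rest is the geometric-series computation above.
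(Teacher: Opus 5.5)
Your proof is correct and follows the paper's route. The paper writes $\mathbb{E}[\tau^{\triangledown}_{\text{strict}}]=\sum_{k=1}^{K}P(\tau^{\triangledown}_{\text{strict}}\ge k)$ by the tail-sum formula, which is your linearity step since $\prod_{j\le k}\triangledown(\hat{y}_j,y_j)=\mathbf{1}\{\tau^{\triangledown}_{\text{strict}}\ge k\}$; it then sets $P(\tau^{\triangledown}_{\text{strict}}\ge k)=\alpha^k$, sums the geometric series, and uses $\alpha/(1-\alpha)=1/\epsilon-1<1/\epsilon$. The paper's $\tau$ excludes the bonus token and the proposition displays $\mathbb{E}[\tau^{\triangledown}_{\text{strict}}]=\sum_{k=1}^{K}\alpha^k$, so keep that as your main statement and treat the bonus-token variant and your chain-rule form of the assumption (a valid weakening of the paper's i.i.d. hypothesis) as remarks.
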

\begin{equation}
\label{eq:bound_of_strict}
\mathbb{E}[\tau^{\triangledown}_{\text{strict}}] = \sum\nolimits_{k=1}^{K} \alpha^k \approx \frac{\alpha}{1-\alpha} < \frac{1}{\epsilon}.
\end{equation}
The upper bound established in Proposition~\ref{prop:strict_bound} exposes an intrinsic \textbf{efficiency bottleneck}: the speedup is shackled by the draft model's raw failure rate $\epsilon$. 
Existing strictly SD methods strive to improve the mean accepted length $\mathbb{E}[\tau^{\triangledown}]$ \textit{indirectly}, either by enhancing the alignment of the draft model through curated training or involving heavy draft tree structures. In contrast, loosely SD approaches can \textbf{directly and linearly scale up} $\mathbb{E}[\tau^{\triangledown}]$ without increasing any draft overhead. We formalize this capability in the following theorem.

\begin{theorem}[Scaling Law under Loose Verification]
\label{thm:sparsity_scaling}
Let the token sequence be partitioned into sparse yet critical visual-relevant tokens $\mathcal{V}$ and dense yet redundant visual-irrelevant tokens $\mathcal{V}'$ with a visual density $\rho \coloneqq |\mathcal{V}|/K$, where $\rho \ll 1$. 
By \textbf{exploiting this sparsity}, the proposed loose verification mechanism achieves a scaling ratio $\gamma$ inversely proportional to the visual density (proof in~\Cref{app:proof_theorem_2.3}):
\end{theorem}
\begin{equation}
\frac{\mathbb{E}[\tau^{\triangledown}_{\text{loose}}]}{\mathbb{E}[\tau^{\triangledown}_{\text{strict}}]} \approx \frac{1}{\rho}.
\end{equation}

\begin{remark}[Diluting the Failure Rate]
Theorem~\ref{thm:sparsity_scaling} offers a rigorous guarantee: our mechanism mathematically \textbf{dilutes the raw failure rate $\epsilon$ into a significantly lower effective failure rate $\rho\epsilon$}. By effectively ``bypassing'' the strict geometric decay on visual-irrelevant tokens, we enable SD to surpass the theoretical ceiling imposed by the draft model alone. 
Thus, such a loose verification exhibits high tolerance to draft errors, significantly expanding the feasible region for acceleration even when the base alignment $\alpha$ is suboptimal.
\end{remark}

Driven by the empirical observations and theoretical insights, we present \method, a novel loosely speculative decoding for Video-LLMs. \method operationalizes the concept of loose verification by leveraging visual semantics, thereby translating the aforementioned theoretical potential into practical wall-clock speedup.
\section{\method}

\begin{figure}[t]
  \centering
  \includegraphics[width=0.48\textwidth]{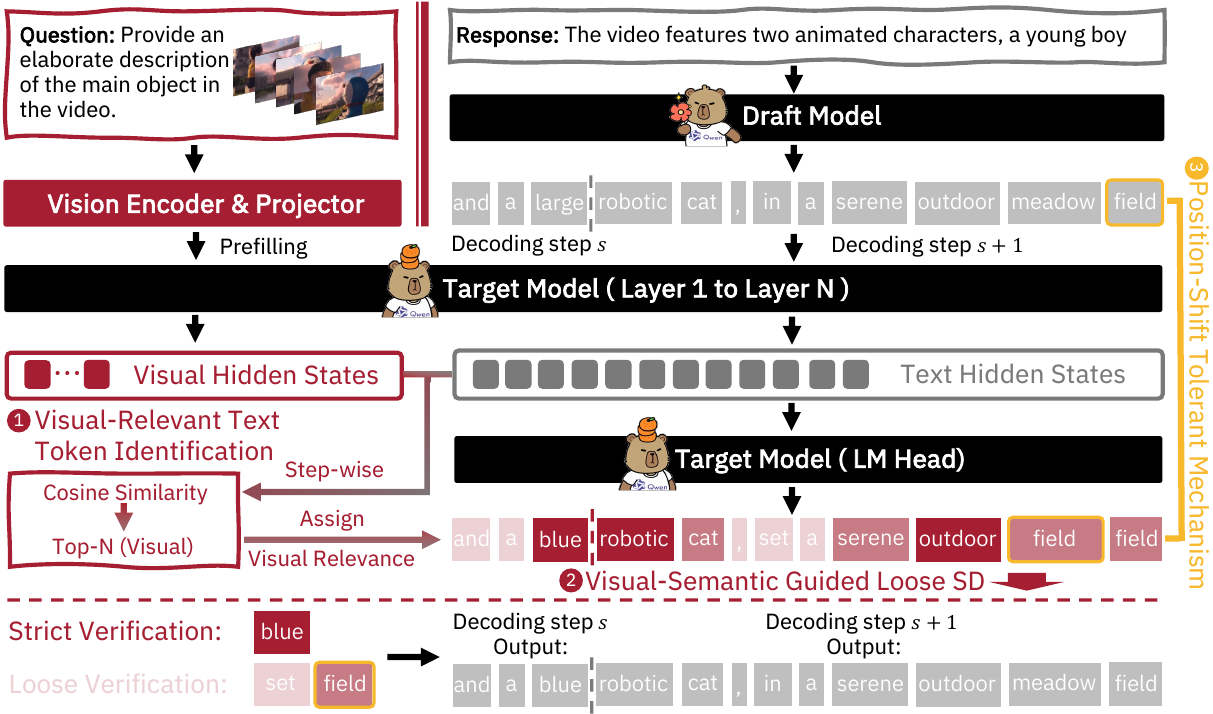}
  \caption{Overview of \method.}
  \label{fig:method}
\end{figure}

\begin{figure*}[t]
\centering
  \includegraphics[width=\textwidth]{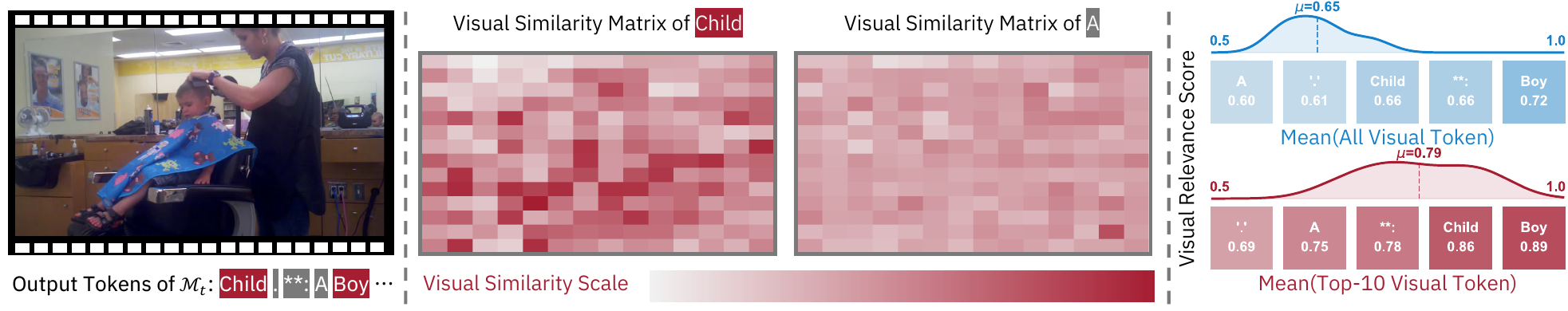}
  \caption{\textbf{Left:} (a) Decoded tokens from the target model $\mathcal{M}_t$. \textbf{Middle:} (b) Visualization of the visual similarity matrix for the decoded tokens. \textbf{Right:} (c) Distribution of visual relevance score w/o and w/ Top-N (N=10) selection.}
  \label{fig:method_combine}
\end{figure*}

%
As depicted in~\Cref{fig:method}, \method first identifies visual-relevant tokens at each decoding step (\Cref{section:method_1}), then strictly verifies relevant ones while loosely verifying irrelevant ones (\Cref{section:method_2}). Additionally, we design a position shift-tolerant mechanism to accept mismatched tokens appearing in a near span of the draft sequence (\Cref{section:method_3}).

\subsection{Visual-Relevant Text Token Identification }
\label{section:method_1}

In~\Cref{sec:empirical_sparsity}, we empirically demonstrate that visual understanding tasks exhibit a pronounced disparity in sensitivity between visual-relevant and visual-irrelevant tokens in the output. However, to exploit this discrepancy for accelerating inference, we require a metric that can quantify visual relevance \textit{during} the decoding stage. 

In prior work~\cite{DBLP:conf/icml/0020FMZ0CGONKZ25}, the cosine similarity between vision embeddings and text embeddings is used to measure their relevance. Since the vision embeddings are projected into the same textual embedding space via the vision projector, the distance between these two representations can effectively reflect their association in the semantic space within the LLM backbone.
Leveraging such cross-modal similarity, prior work~\cite{DBLP:conf/icml/0020FMZ0CGONKZ25} focuses on text-guided visual token pruning, whereas \method takes the opposite perspective: we novelly apply a vision-guided approach to facilitate text generation.

Specifically, we decompose $\mathcal{M}_{t}$ into the final-layer representation function $\tilde{\mathcal{M}_{t}}(\cdot)$ and the language modeling head $\mathrm{lm\_head}(\cdot)$ as follows:
\begin{equation}
\label{eq:model_layer}
\mathcal{M}_{t}(\mathbf{\cdot}) \rightarrow \mathrm{lm\_head}\!\big(\tilde{\mathcal{M}_{t}}(\mathbf{\cdot})\big),
\end{equation}
where $\tilde{\mathcal{M}_{t}}(\mathbf{\cdot})$ denotes the output hidden states of the last layer of $\mathcal{M}_{t}$, before the language model head. 
We use $\tilde{\mathcal{M}}_{t}(\cdot)$ for subsequent computations because (i) the Transformer’s final-layer hidden states provide a strong feature representation, consistent with the design choice in EAGLE~\cite{DBLP:conf/icml/LiW0024}, and (ii) the sequence produced by $\mathcal{M}_{t}$ determines the final output, rather than $\mathcal{M}_{d}$.



Then, for a given input video and prompt, let $E_{V} \in \mathbb{R}^{l_v \times d}$ be the input embedding after the vision encoder and projector, where $l_v$ is the visual sequence length and $d$ is the hidden dimension of the target model $\mathcal{M}_{t}$.
At each decoding step $s$, the draft model $\mathcal{M}_{d}$ generates $K$ draft tokens, yielding the corresponding draft text embedding $E_{D}^{s} \in \mathbb{R}^{K \times d}$.
We extract the hidden states $H_V$ (during prefilling) and $H_D^s$ (during decoding) as:
\begin{equation}
\label{eq:hidden_states}
\begin{aligned}
H_V &= \tilde{\mathcal{M}_{t}}(E_{V}) \in \mathbb{R}^{l_v \times d}, \\
H_D^s &= \tilde{\mathcal{M}_{t}}(E_{D}^s) \in \mathbb{R}^{K \times d}.
\end{aligned}
\end{equation}

In each decoding step, $H_V$ and $H_D^s$ are used for similarity computation. Normally, visual cues consist of both salient subjects and abundant background elements. 
As visualized in~\Cref{fig:method_combine}~(a,b), we find that visual-relevant tokens associated with the main subject (e.g., ``Child'') exhibit a much sharper similarity profile: they align strongly with the salient subject cues while showing low similarity to the background. In contrast, less relevant tokens (e.g., ``A'') tend to have a more uniform similarity distribution across all background cues.
If the full $H_V$ containing all video cues is used as the reference, the similarity of visual-relevant tokens can be diluted by abundant background cues, making them difficult to distinguish from visual-irrelevant ones.
Therefore, we compute the cosine similarity between the text hidden states and the visual hidden states, and take the average of the Top-N largest similarities along the visual dimension as the final \textbf{visual relevance} metric $\mathcal{A}_s$ for each text token in the decoding step $s$, denoted as:
\begin{equation}
\begin{aligned}
\label{eq:topk_mean}
&\mathcal{A}_s
=\frac{1}{N}\sum\nolimits_{n=1}^{N}\mathrm{TopN}\big(\cos(H_D^s, H_V),\,N\big), \\ &\text{where} \ 
\cos(H_D^s, H_V)
= \frac{ H_D^{s} \cdot H_V}{\lVert H_D^s\rVert\,\lVert H_V\rVert}.
\end{aligned}
\end{equation}
Here, $N$ is a hyper-parameter controlling the number of critical visual tokens that provide guidance. This operation effectively improves the discriminability of the visual relevance (see~\Cref{fig:method_combine}~(c)), with minimal computation cost (see~\Cref{fig:computation_breakdown}).

\subsection{Visual-Semantic Guided Loosely SD}
\label{section:method_2}
Given the visual similarity of text tokens in each decoding step $\mathcal{A}_s$, we leverage it as guidance to perform loosely speculative decoding.
Specifically, we adopt strict verification for the tokens with high visual relevance, while allowing the acceptance of visual-irrelevant tokens. 
Let $\lambda \in [0,1]$ be a loose parameter to control the percentage of visual-irrelevant tokens that are loosened. Then in each decoding step, we consider the $\lambda K$ tokens with the \textit{lowest} visual relevance scores as the practical instantiation of $\mathcal{V}'$, aligning with the redundancy assumption in~\Cref{thm:sparsity_scaling}.
Then the index set of $\mathcal{V}'$ can be denoted as:
\begin{equation}
\label{eq:sort_relevance}
\mathcal{I}_s'=\operatorname{argsort}_{\uparrow}
(\mathcal{A}_s)_{0:\lambda K},
\end{equation}
Finally, we define the verification strategy as $\triangledown_\textsc{LVSpec}$ to decide whether the draft tokens match the target model's verified tokens at each position:
\begin{equation}
\label{eq:verification_strategy_definition}
\triangledown_\textsc{LVSpec}(y_i^s,\hat{y}_i^s) = 
\begin{cases}
1, &  \text{if } y_i^s \text{ matches }\hat{y}_i^s, \\
1, & \text{else if } i \in \mathcal{I}_s', \\
0, & \text{otherwise}.
\end{cases}
\end{equation}
By choosing a larger $\lambda$, we can apply a more aggressive relaxation to pursue higher speedups. Conversely, for tasks that require a more conservative behavior, we can use a smaller $\lambda$, eventually recovering the standard exact-match scheme as $\lambda$ decreases.
This mechanism allows the level of relaxation to be adaptively tuned according to task characteristics and model behaviors.

Notably, the verification above is based on the local visual relevance among text tokens at each decoding step.
This design is chosen because hidden state values of nearby text tokens are more directly comparable and are less susceptible to absolute drifts induced by the growing generation context.

\subsection{Position Shift-Tolerant Mechanism}
\begin{figure}[!ht]
  \centering
  \includegraphics[width=0.48\textwidth]{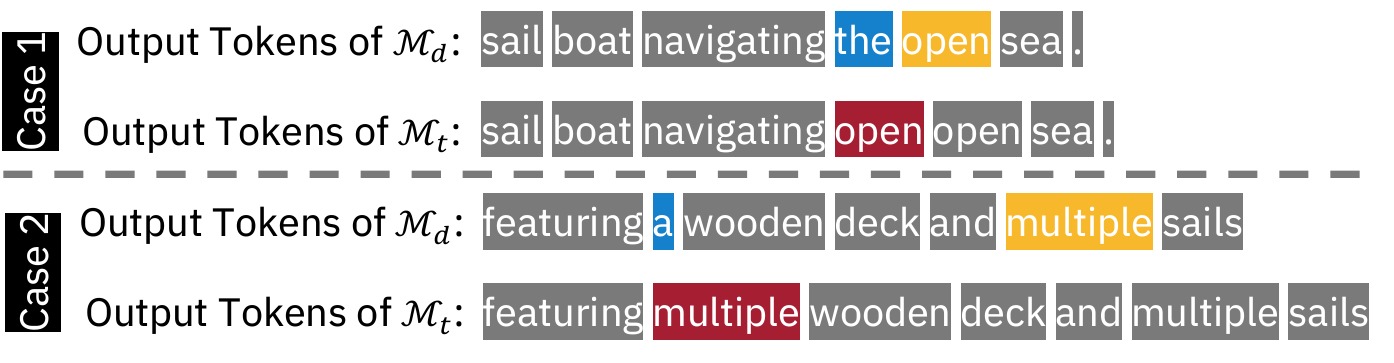}
  \caption{Insight cases of \textsc{PST}. Key background colors denote \colorboxlabel{hanblue}{Mismatched Draft Token}, \colorboxlabel{hanyellow}{Shifted Draft Token}, and \colorboxlabel{hanred}{Verified Target Token}, respectively.}
  \label{fig:pst_insight}
\end{figure}
\label{section:method_3}

As illustrated in~\Cref{fig:pst_insight}, we observe that in video understanding outputs, occasional mismatches arise due to positional shifts. Specifically, such shifts are typically caused by the draft model introducing additional phrases (token ``the'' in \textit{case 1}), or by differences in the description order between the draft and target models (token ``multiple'' in \textit{case 2}). 
We argue that in such cases, the target model’s intended output is covered by the draft model’s output distribution. In~\Cref{fig:pst_insight}, the draft output is actually aligned with that of the target model and relaxing the currently mismatched token may not lead to a loss of visual information in subsequent generations.
Based on this finding, we further incorporate a position shift-tolerant mechanism (\textsc{PST}), a bonus component for such cases.
If the currently mismatched verified token appears within a nearby span of the draft sequence, we treat the mismatch as position shift-induced and accept it.
Specifically, we set the draft length $K$ at each decoding step as the tolerable shift range, and the verification strategy $\triangledown_\textsc{LVSpec}$ is reformulated as:
\begin{equation}
\label{eq:verification_strategy_definition2}
\triangledown_\textsc{LVSpec}(y_i^s,\hat{y}_i^s) = 
\begin{cases}
1, &  \text{if } y_i^s \text{ matches }\hat{y}_i^s, \\
1, & \text{else if } i \in \mathcal{I}_s', \\
1, & \text{else if } y_i^s \text{ is in } \{\hat{y}_k^s\}_{k=1}^K, \\
0, & \text{otherwise}.
\end{cases}
\end{equation}
Finally, the accepted length is defined using the longest common prefix logic.
Note that \textsc{PST} serves only as a supplementary strategy after visual-semantic guidance is applied,
selectively enabled in minority cases depending on the output pattern\footnote{Potential improvement of \method is in~\Cref{app:potential_improvement}.}.



\section{Experiment}

\begin{table*}[t]
  \centering
  \scriptsize
  \setlength{\tabcolsep}{10pt}
  \renewcommand{\arraystretch}{0.5} 
  \begin{tabular}{l|l|ccc|ccc|ccc}
    \toprule
    \multicolumn{1}{l|}{\multirow{2}{*}{\textbf{Model}}} &
    \multicolumn{1}{l|}{\multirow{2}{*}{\textbf{Method}}} &
    \multicolumn{3}{c|}{\textbf{Video Detail Caption (VDC)}} &
    \multicolumn{3}{c|}{\textbf{Video Detail Description (VDD)}} &
    \multicolumn{3}{c}{\textbf{MovieChat}} \\
    & &
    $\tau$ & Token/s & Speedup &
    $\tau$ & Token/s & Speedup &
    $\tau$ & Token/s & Speedup \\
    \midrule

    \multirow{6}{*}{\rotatebox{90}{\makecell[c]{\ttfamily Std.-SD \\ \ttfamily Qwen2.5-VL}}}
    & \gray{Autoregressive}
      & \gray{--}   & \gray{7.95}  & \gray{1.00$\times$}
      & \gray{--}   & \gray{10.07} & \gray{1.00$\times$}
      & \gray{--}   & \gray{9.81}  & \gray{1.00$\times$} \\
    & \textsc{Naive SD}
      & 3.28 & 11.17 & 1.41$\times$
      & 3.36 & 14.44 & 1.42$\times$
      & 3.19 & 13.38 & 1.36$\times$ \\
    & \textsc{SpecVLM}
      & 3.29 & 15.90 & 2.00$\times$
      & 3.31 & 17.96 & 1.76$\times$
      & 3.19 & 17.05 & 1.74$\times$ \\
    &$\textsc{FLy}^{\ominus}$
      & 6.81 & 18.50 & 2.32$\times$
      & 7.20 & 22.14 & 2.20$\times$
      & 6.74 & 20.76 & 2.12$\times$ \\
    & \textsc{FLy}
      & 4.37 & 13.60 & 1.71$\times$
      & 4.34 & 14.51 & 1.44$\times$
      & 3.86 & 13.01 & 1.33$\times$ \\
    & \hanredcell{\method (Ours)}
      & \hanredcell{\textbf{7.76}} & \hanredcell{\textbf{21.47}} & \hanredcell{\textbf{2.70$\times$}}
      & \hanredcell{\textbf{7.68}} & \hanredcell{\textbf{23.58}} & \hanredcell{\textbf{2.34$\times$}}
      & \hanredcell{\textbf{7.74}} & \hanredcell{\textbf{21.49}} & \hanredcell{\textbf{2.19$\times$}} \\
    \midrule

    \multirow{6}{*}{\rotatebox{90}{\makecell[c]{\ttfamily Self-SD \\ \ttfamily Qwen2.5-VL}}}
    & \gray{Autoregressive}
      & \gray{--}   & \gray{20.60} & \gray{1.00$\times$}
      & \gray{--}   & \gray{24.75} & \gray{1.00$\times$}
      & \gray{--}   & \gray{25.51} & \gray{1.00$\times$} \\
    & \textsc{Naive SD}
      & 4.43 & 18.21 & 0.88$\times$
      & 4.46 & 23.22 & 0.94$\times$
      & 4.46 & 21.86 & 0.86$\times$ \\
    & \textsc{SpecVLM}
      & 3.85 & 31.45 & 1.53$\times$
      & 3.89 & 35.00 & 1.41$\times$
      & 3.66 & 32.20 & 1.26$\times$ \\
    &$\textsc{FLy}^{\ominus}$
      & 6.86 & 32.24 & 1.56$\times$
      & 7.17 & 33.58 & 1.36$\times$
      & 5.98 & 28.23 & 1.11$\times$ \\
    & \textsc{FLy}
      & 6.53 & 28.16 & 1.37$\times$
      & 6.35 & 30.30 & 1.22$\times$
      & 5.44 & 25.82 & 1.01$\times$ \\
    & \hanredcell{\method (Ours)}
      & \hanredcell{\textbf{8.36}} & \hanredcell{\textbf{36.49}} & \hanredcell{\textbf{1.77$\times$}}
      & \hanredcell{\textbf{8.64}} & \hanredcell{\textbf{39.32}} & \hanredcell{\textbf{1.59$\times$}}
      & \hanredcell{\textbf{7.99}} & \hanredcell{\textbf{34.04}} & \hanredcell{\textbf{1.33$\times$}} \\
    \midrule

    \multirow{6}{*}{\rotatebox{90}{\makecell[c]{\ttfamily Std.-SD \\ \ttfamily LLaVA-OV}}}
    & \gray{Autoregressive}
      & \gray{--}   & \gray{6.62}  & \gray{1.00$\times$}
      & \gray{--}   & \gray{6.00}  & \gray{1.00$\times$}
      & \gray{--}   & \gray{4.41}  & \gray{1.00$\times$} \\
    & \textsc{Naive SD}
      & 3.40 & 12.50 & 1.89$\times$
      & 3.72 & 11.74 & 1.96$\times$
      & 3.88 & 5.36  & 1.22$\times$ \\
    & \textsc{SpecVLM}
      & 3.37 & 15.75 & 2.38$\times$
      & 3.53 & 14.94 & 2.49$\times$
      & 3.77 & 8.45  & 1.92$\times$ \\
    &$\textsc{FLy}^{\ominus}$
      & 4.83 & 13.82 & 2.09$\times$
      & 4.93 & 11.90 & 1.98$\times$
      & 5.55 & 9.23  & 2.09$\times$ \\
    & \textsc{FLy}
      & 4.12 & 12.17 & 1.84$\times$
      & 4.58 & 11.50 & 1.92$\times$
      & 5.01 & 5.72  & 1.30$\times$ \\
    & \hanredcell{\method (Ours)}
      & \hanredcell{\textbf{7.34}} & \hanredcell{\textbf{19.44}} & \hanredcell{\textbf{2.94$\times$}}
      & \hanredcell{\textbf{7.59}} & \hanredcell{\textbf{17.45}} & \hanredcell{\textbf{2.91$\times$}}
      & \hanredcell{\textbf{8.10}} & \hanredcell{\textbf{12.16}} & \hanredcell{\textbf{2.75$\times$}} \\
    \midrule

    \multirow{6}{*}{\rotatebox{90}{\makecell[c]{\ttfamily Self-SD \\ \ttfamily LLaVA-OV}}}
    & \gray{Autoregressive}
      & \gray{--}   & \gray{26.45} & \gray{1.00$\times$}
      & \gray{--}   & \gray{17.66} & \gray{1.00$\times$}
      & \gray{--}   & \gray{17.23} & \gray{1.00$\times$} \\
    & \textsc{Naive SD}
      & 4.58 & 23.83 & 0.90$\times$
      & 4.59 & 15.74 & 0.89$\times$
      & 4.79 & 11.16 & 0.65$\times$ \\
    & \textsc{SpecVLM}
      & 3.91 & 35.46 & 1.34$\times$
      & 3.95 & 29.08 & 1.65$\times$
      & 4.17 & \textbf{21.19} & \textbf{1.23$\times$} \\
    &$\textsc{FLy}^{\ominus}$
      & 6.83 & 31.72 & 1.20$\times$
      & 6.82 & 27.01 & 1.45$\times$
      & 8.06 & 17.94 & 1.04$\times$ \\
    & \textsc{FLy}
      & 6.23 & 29.39 & 1.11$\times$
      & 6.20 & 25.57 & 1.45$\times$
      & 7.65 & 17.29 & 1.00$\times$ \\
    & \hanredcell{\method (Ours)}
      & \hanredcell{\textbf{8.46}} & \hanredcell{\textbf{36.40}} & \hanredcell{\textbf{1.38$\times$}}
      & \hanredcell{\textbf{8.87}} & \hanredcell{\textbf{34.96}} & \hanredcell{\textbf{1.98$\times$}}
      & \hanredcell{\textbf{8.77}} & \hanredcell{20.28} & \hanredcell{1.18$\times$} \\
    \bottomrule
  \end{tabular}
  \caption{
  Speedup ratios and mean accepted length $\tau$ on video captioning and video question answering tasks.
  }
  \label{tab:speedup}
\end{table*}

\begin{figure*}[t]
  \includegraphics[width=\textwidth]{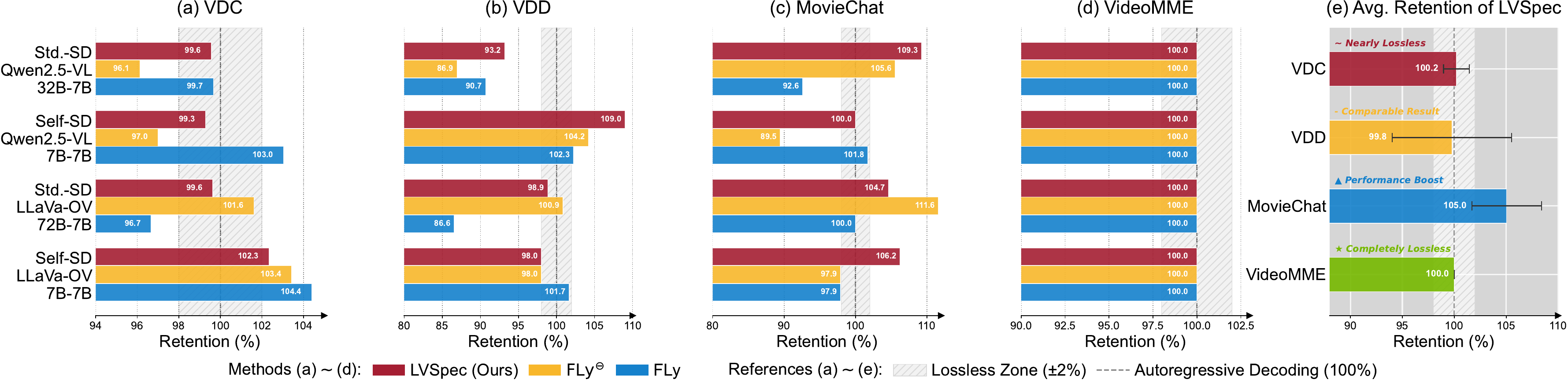}
  \caption{
    Performance retention of loosely SD methods.~\Cref{tab:accuracy} in~\Cref{app:results} shows the raw performance data.
  }
  \label{fig:accuracy}
\end{figure*}

\paragraph{Tasks and Benchmarks.}
We conduct experiments on four video captioning and video question answering tasks, including Video Detail Caption (VDC)~\cite{DBLP:conf/iclr/ChaiSDMMBHXM25}, Video Detail Description (VDD)~\cite{vdc2024}, MovieChat~\cite{DBLP:conf/cvpr/SongCWZZWCG0ZLH24}, and Video-MME~\cite{DBLP:conf/cvpr/FuDLLRZWZSZCLLZ25}, using \texttt{LMMs-Eval}~\cite{lmms_eval2024} for evaluation.
Benchmark details are provided in~\Cref{app:benchmark_detail}.

\paragraph{Baselines.}
We compare our method with both training-free lossless SD methods and loosely SD methods. 
(i) \textbf{Lossless SD methods}. \ding{172} \textsc{Naive SD} directly uses an existing model as the draft model. \ding{173} \textsc{SpecVLM}~\cite{ji2025specvlm} applies uniform video token pruning to the draft model, reserving 10\% of the visual tokens to make it compact. 
(ii) \textbf{Loosely SD methods}. 
\ding{174} \textsc{FLy}~\cite{li2025looselysd} consists of an entropy gate and a deferred window: a mismatched token is accepted when its entropy exceeds a threshold and all tokens within the subsequent window match. 
\ding{175} $\textsc{FLy}^{\ominus}$ uses only the entropy gate component of \textsc{FLy}, serving as a more loosened variant.
Both \textsc{Naive SD} and \textsc{SpecVLM} adopt the best-performing draft tree configuration in the original paper, consistent with EAGLE~\cite{DBLP:conf/icml/LiW0024}. For loosely SD methods $\textsc{FLy}^{\ominus}$, \textsc{FLy}, and \method, we use the same number of draft tokens ($K=10$), forming a chain-like structure that also matches the design in \textsc{FLy}.

\paragraph{Target and Draft Models.}
We choose two representative Video-LLMs families: \texttt{Qwen2.5-VL}~\cite{DBLP:journals/corr/abs-2502-13923} and \texttt{LLaVa-OneVision}~\cite{DBLP:journals/tmlr/0080ZGZ00ZZL0L25} as our target and draft models. 
Aligning with \textsc{SpecVLM}, we evaluate two speculative decoding settings. (i) \textbf{Standard-SD (Std.-SD)}: using a smaller Video-LLM from the same model family.
``\texttt{Std.-SD-Qwen2.5-VL}'' and ``\texttt{Std.-SD-LLaVA-OV}'' denote settings that use \texttt{Qwen2.5-VL-32B} and \texttt{LLaVA-OneVision-72B} as the target models, and \texttt{Qwen2.5-VL-7B} and \texttt{LLaVA-OneVision-7B} with video token pruning applied as the corresponding draft models, respectively.
(ii) \textbf{Self-SD}: using the original model and full video tokens as verifier, while using the original model with pruned video tokens as drafter.
``\texttt{Self-SD-Qwen2.5-VL}'' and ``\texttt{Self-SD-LLaVA-OV}'' adopt \texttt{Qwen2.5-VL-7B} and \texttt{LLaVA-OneVision-7B}, respectively.
A study of additional draft model sizes is provided in~\Cref{app:draft_model_size}.
We do not introduce additional training for the draft model, as our primary focus is to isolate how the verification strategy affects the mean accepted length $\tau$ and speedup ratio.

\paragraph{Implementation Details.}
For all experiments, we set the loose parameter $\lambda=0.7$ and the number of critical visual tokens $N=10$. 
We set the maximum generation length to 512 tokens and conduct all experiments on two NVIDIA H200 GPUs.
More details of the implementation of \textsc{FLy}~\cite{li2025looselysd} and \textsc{SpecVLM}~\cite{ji2025specvlm} are elaborated in~\Cref{app:baseline_detail}.


\subsection{Main Result}
We evaluate the \textbf{efficiency metric} including speedup ratio and the mean accepted token length $\tau$, and the \textbf{performance metric} including rating score and accuracy on video understanding tasks, as summarized in~\Cref{tab:speedup} and~\Cref{fig:accuracy}.

\textbf{\method shows superior speedup and mean accepted length $\tau$ across various model settings and video understanding tasks}. 
As shown in~\Cref{tab:speedup}, \method achieves the highest speedups of \textbf{$2.70\times$} and \textbf{$2.94\times$} under the \texttt{Std.-SD-Qwen2.5-VL} and \texttt{Std.-SD-LLaVA-OV} settings, respectively. 
The mean accepted length $\tau$ of \method in the \texttt{Std.-SD} setting ranges from 7.34 to 8.10, which is $2.15\times$ that of the lossless SD baseline \textsc{SpecVLM} and $>1.46\times$ that of the loosely SD baseline $\textsc{FLy}^{\ominus}$ and \textsc{FLy}.
For the \texttt{Self-SD-Qwen2.5-VL} and \texttt{Self-SD-LLaVA-OV} settings, \method also achieves the highest mean accepted length $\tau$ (8.87) and the highest speedup ($1.98\times$). 
In addition, the observed improvements of \method with $\lambda=0.7$ closely matches the effect of $\rho$ on the failure rate analyzed in~\Cref{thm:sparsity_scaling} (see~\Cref{app:results} for a detailed explanation).
On tasks that require longer output such as VDC and VDD, \method tends to achieve higher speedups.
On \texttt{Self-SD LLaVA-OV} setting and MovieChat task, \method attains a slightly lower speedup than \textsc{SpecVLM}. 
We attribute this to the already high $\tau$ before relaxation in this setting, which leaves limited room for relaxation, as well as the fact that \textsc{SpecVLM} employs a draft tree with more verification nodes (26) than \method (10).
In~\Cref{app:draft_tree}, we additionally report results of \textsc{SpecVLM} w/o draft trees as well as \method incorporating draft trees.

\textbf{\method achieves nearly lossless performance on video captioning and open-ended question answering tasks, and remains fully lossless on multiple-choice task compared with the target model's original output.}
As shown in~\Cref{fig:accuracy}, \method achieves average accuracy retention ratios of 100.2\%, 99.8\%, 105.0\%, and 100.0\% on VDC, VDD, MovieChat, and Video-MME, respectively. 
In comparison, the retention ratios of the draft model on the four tasks are 94.7\%, 95.2\%, 105.9\%, and 89.3\% on average.
Across all model settings and tasks, \method retains at least 93.2\% of the original accuracy, outperforming \textsc{FLy}$^{\ominus}$ (86.9\%) and \textsc{FLy} (86.6\%), and demonstrating better overall stability.
While preserving performance, \method also achieves higher speedups than these baselines (1.58$\times$ of \textsc{FLy} on VDC), indicating that its loosened criterion is more precise and effective.
Notably, on MovieChat, \method even yields a 5.0\% accuracy improvement. We attribute this to the higher sensitivity of short generation tasks to small perturbations, and that some existing small models can occasionally outperform the large model in certain settings (see~\Cref{tab:accuracy} in~\Cref{app:results}).

\begin{figure}[t]
  \includegraphics[width=\columnwidth]{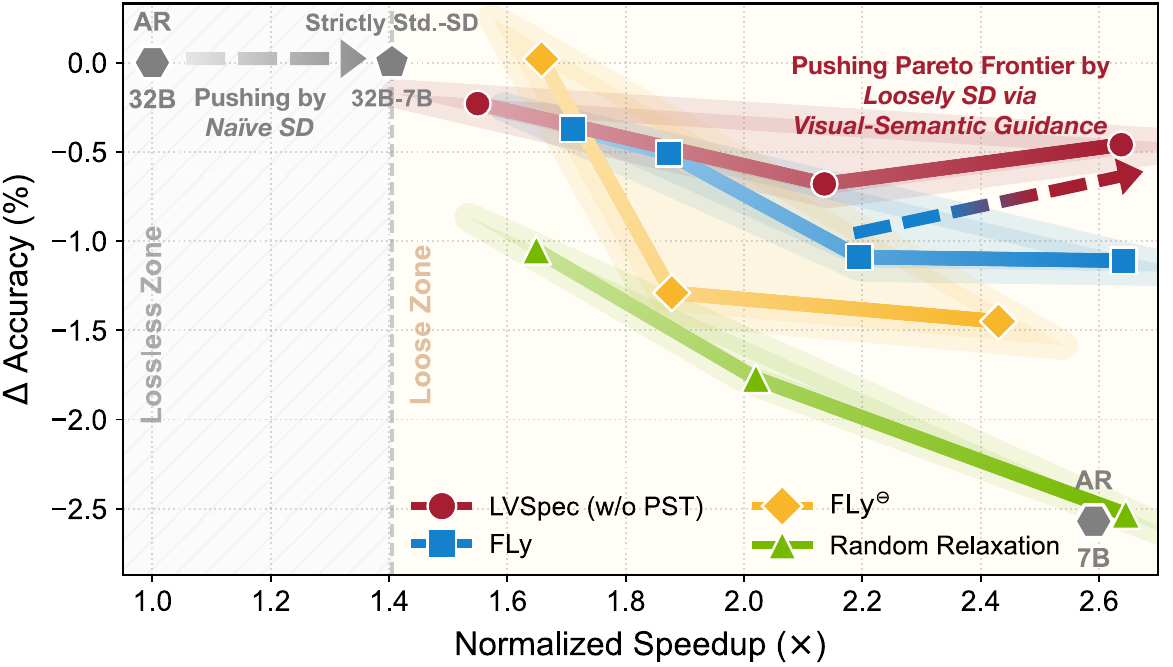}
  \caption{The Pareto frontier of accuracy and speedup for Video-LLM decoding methods under \texttt{Qwen2.5-VL-32B/7B} and VDC task.}
  \label{fig:effective}
\end{figure}


\begin{table}[t]
  \centering
  \scriptsize
  \setlength{\tabcolsep}{11pt}
 \renewcommand{\arraystretch}{1.1} 
  \begin{tabular}{@{}l|cccc@{}}
    \toprule
    Model & \textsc{PST} & $\tau$ & Speedup & Retention (\%) \\
    \midrule
    \multirow{2}{*}{\makecell[l]{\texttt{Std.-SD} \\ \texttt{Qwen2.5-VL}}}
      & $\times$      & 7.27 & 2.54$\times$ & 98.5 \\
      & $\checkmark$  & 7.76 & 2.70$\times$ & 99.6 \\
    \midrule
    \multirow{2}{*}{\makecell[l]{\texttt{Std.-SD} \\ \texttt{LLaVa-OV}}}
      & $\times$      & 6.89 & 2.82$\times$ & 99.1 \\
      & $\checkmark$  & 7.34 & 2.94$\times$ & 99.8 \\
    \bottomrule
  \end{tabular}
  \caption{Ablation study on position shift-tolerant (\textsc{PST}).}
  \label{tab:effect_pst}
\end{table}

\subsection{Ablation Study}
We study the effect of the visual relevance metric and the effect of the \textsc{PST}. All experiments in this section are conducted under \texttt{Qwen2.5-VL-32B/7B} and VDC benchmark, using accuracy as metric.

\paragraph{Effect of Visual-Semantic Guidance.}
\label{sec:ablation_effect_of_visual}
Here, we evaluate the loose criterion using only visual-semantic guidance (w/o \textsc{PST}, $\lambda\in\{0.2, 0.5, 0.7\}$) against baselines including $\textsc{FLy}^{\ominus}$ ($\mathrm{Entropy}\in\{0.1, 0.15, 0.175\}$), \textsc{FLy} ($(\mathrm{Window},\mathrm{Entropy})\in\{(4,0.1), (2,0.1), (1,0.1),(1,0.05)\}$), and random relaxation with sampling probabilities ($\{0.25, 0.5, 0.75\}$).
As shown in~\Cref{fig:effective}, \method effectively pushes the Pareto frontier by leveraging visual relevance to precisely loosen the verification criterion.
Consistent with~\Cref{thm:sparsity_scaling}, \method achieves a superior accuracy-speedup trade-off compared to ``visually blind'' baselines, which fail to maintain effectiveness under higher relaxation levels. While lossless SD ensures quality, \method enables significant acceleration in the loose regime with only marginal quality costs.

\paragraph{Effect of \textsc{PST}.}
For the position shift-tolerant mechanism (\textsc{PST}), as shown in \Cref{tab:effect_pst}, incorporating \textsc{PST} boosts the speedup from $2.54\times$ to $2.70\times$ and $\tau$ from 7.27 to 7.76 for \texttt{Std.-SD-Qwen2.5-VL}, with consistent gains for \texttt{Std.-SD-LLaVA-OV}. 
Notably, \textsc{PST} does not reduce accuracy, indicating that relaxing a small portion of position shift-induced mismatches is benign.

\begin{table}[t]
\centering
\scriptsize
\setlength{\tabcolsep}{10pt}
\renewcommand{\arraystretch}{0.6} 
\begin{tabular}{@{}l|cccc@{}}
\toprule
Hyperparameter & Val  & $\tau$ & Speedup & Retention (\%) \\
\midrule
\multirow{5}{*}{$\lambda$}
  & 0.0 & 3.41 & 1.40$\times$ & 100 \\
  & 0.2 & 4.12 & 1.55$\times$ & 99.3 \\
  & 0.5 & 5.83 & 2.14$\times$ & 97.8 \\
  & 0.7 & 7.27 & 2.54$\times$ & 98.5  \\
  & 0.9 & 8.97 & 3.03$\times$ & 92.8  \\
\midrule
\multirow{3}{*}{$N$}
  & 1   & 7.15 & 2.46$\times$ & 94.6 \\
  & 10 & 7.27 & 2.54$\times$ & 98.5  \\
  & 100 & 7.47 & 2.65$\times$ & 93.4  \\
\bottomrule
\end{tabular}
\caption{Sensitivity study of \method (w/o PST) on loose hyperparameter $\lambda$ and critical visual token $N$.}
\label{tab:hyper-parameter}
\end{table}

\subsection{Sensitivity Study on Hyperparameters}
Here, we investigate the two hyperparameters of \method, as illustrated in~\Cref{tab:hyper-parameter}.
For the loose parameter $\lambda$, we evaluate a range of values in $[0,1]$, spanning verification strategies from conservative to aggressive. For smaller $\lambda$, the speedup achieved by \method increases as $\lambda$ grows, while accuracy remains $>97.8\%$. Under a highly relaxed setting ($\lambda=0.9$), excessive relaxation leaves more visual-relevant tokens without strict verification, leading to an accuracy drop to 92.8\%. Accordingly, we adopt a moderate choice of $\lambda=0.7$, which strikes a favorable balance between speed and accuracy.

For the number of critical visual tokens $N$, we evaluate $N\in\{1,5,10,100\}$. Using too few critical tokens (e.g.,~$N=1$) fails to capture sufficient visual cues, whereas using too many (e.g.,~$N=100$) can partially dilute truly salient information (refer to~\Cref{fig:method_combine}), both leading to an accuracy drop. Thus, we adopt a moderate choice of $N=10$.
\footnote{Experiments on draft model sizes and draft tree variants are elaborated in~\Cref{app:draft_model_size,app:draft_tree}.}

\subsection{What Tokens Are Loosened?}
\label{app:what_tokens_are_loosened}
To improve the interpretability of \method, we present a case study in~\Cref{fig:case_study1}.
In the model output, words closely tied to the salient subjects and actions such as ``engaging'', ``cocktail'', ``man'', and ``bowls'', are strictly verified, preserving the quality of the video-to-text generation. In contrast, conjunctions, punctuation, and function words such as ``depict'', ``the'', ``.'', and ``\#\#'' are relaxed, substantially boosting the mean accepted length $\tau$.
From a broader perspective on vision-to-text tasks, \method differs from prior work that sparsifies information on the \textit{vision} side of the input. Instead, it opens a new direction by inducing sparsity on the \textit{text} side of the output for orthogonal acceleration.

\begin{figure}[!ht]
  \includegraphics[width=0.5\textwidth]{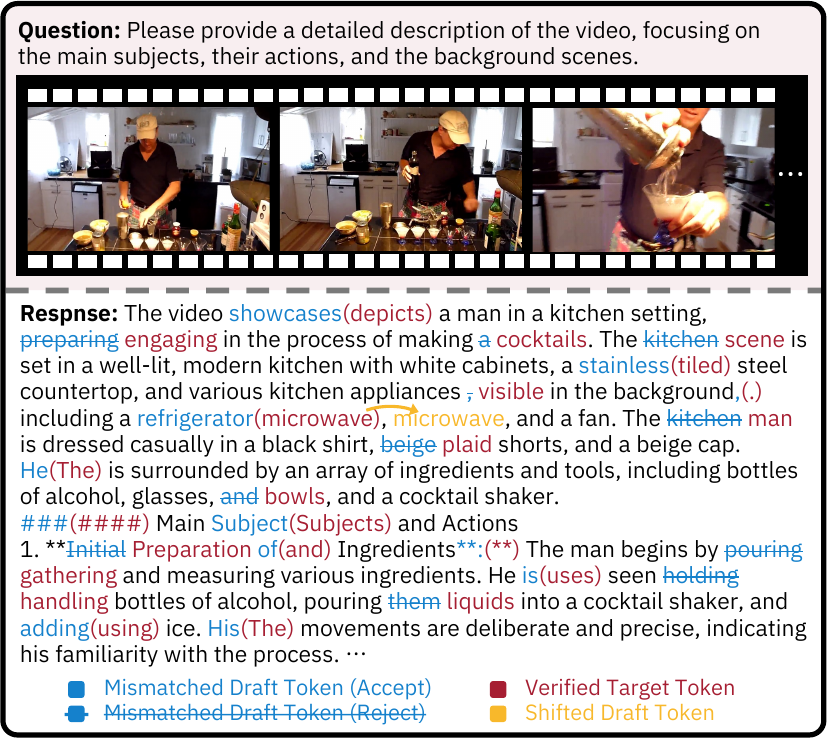}
  \caption{Case study of which tokens are strictly verified and which are relaxed.}
  \label{fig:case_study1}
\end{figure}
\section{Related Work}
\label{app:related_work}
\subsection{Speculative Decoding for LLMs}
Speculative decoding is shown to be an effective approach to accelerate LLMs while maintaining the original output distribution, following a draft-then-verify paradigm.
Initial explorations~\citep{DBLP:conf/icml/LeviathanKM23,DBLP:conf/nips/KimMMMMGK23,DBLP:conf/emnlp/Xia0WCWS23} attempt to use existing small LLMs as draft models to ensure reliable speculation. Self-speculative methods~\citep{xia2025swift,zhang2024draft,DBLP:journals/corr/abs-2505-16162} use partial layers of the original model to generate predictions, without introducing extra models. Retrieval-based methods~\citep{DBLP:conf/naacl/0012ZCL024,DBLP:conf/acl/LuoWZZZ0X25} retrieve $n$-gram continuations as draft tokens.
Long-context SD methods~\citep{DBLP:journals/corr/abs-2502-17421,chen2024magicdec,sun2024triforce} target at long-context scenario.
Tree-based speculation methods~\citep{DBLP:conf/icml/LiW0024,DBLP:conf/emnlp/LiW0024,DBLP:conf/asplos/MiaoOZCWZWZYSSC24} are proposed to boost the average accept length, by predicting multiple candidates and forming draft trees. 
Recent advancements~\citep{DBLP:conf/icml/CaiLGPLCD24,DBLP:conf/icml/LiW0024,DBLP:conf/icml/Du0XWY0LXNTY24,DBLP:conf/iclr/ZhangWHX25} focus on enhancing the efficiency and accuracy of the draft model through specialized training, and performing parallel drafting and verification~\cite{liupearl,shen2026doublebreakingaccelerationlimit,kumar2026speculative}.

\paragraph{Loosely Speculative Decoding}
Strict verification often rejects semantically valid drafts, limiting achievable speedups. To address this, recent advances propose “loosely” variants to relax the criterion.
JudgeDecoding~\cite{DBLP:journals/corr/abs-2504-20039} trains a compact module on top of the embeddings to
produce ``judgement'' of the current continuation.
Following studies~\cite{DBLP:journals/corr/abs-2505-18629,zhong2025speeding,DBLP:journals/corr/abs-2505-13204,DBLP:conf/acl/HolsmanHD25,DBLP:journals/corr/abs-2510-12966} further enrich the area of loose verification.
Recently, \textsc{FLy}~\cite{li2025looselysd} proposes a training-free relaxation mechanism based on the entropy of mismatched tokens and whether the subsequent outputs remain unchanged within a predefined window.
These efforts reflect a rising interest in moving beyond strict token-level equivalence, strategically trading exactness for improved efficiency.

\subsection{Speculative Decoding for LVLMs}
Large Vision-Language Models (LVLMs) suffer from slow inference due to their large parameter counts and long input sequences. To address this, prior work~\citep{ji2025specvlm,DBLP:journals/corr/abs-2509-15235,DBLP:journals/corr/abs-2505-10526,DBLP:journals/corr/abs-2509-11961,DBLP:journals/corr/abs-2510-22641,DBLP:journals/corr/abs-2509-23928,DBLP:journals/corr/abs-2509-11815,DBLP:journals/corr/abs-2505-14260,DBLP:journals/corr/abs-2505-12728,zhang2026sparrow,kong2026parallelvlm,shen2026mmspec} has explored speculative decoding for LVLMs, primarily through vision-aware drafting.
For Video-LLMs, SpecVLM~\cite{ji2025specvlm} pioneers in adopting a training-free method, which performs token pruning for the draft model to address the KV cache bottleneck in long video scenario.
\method is the first to perform visual-aware verification, effectively breaking the performance ceiling imposed by the \textit{exact-match} for Video-LLMs.
\section{Conclusion}


We present \method, the first loosely speculative decoding tailored for Video-LLMs. Thanks to the importance disparity of generated tokens we identified in video understanding, \method leverages visual-semantic guidance to strictly verify visual-relevant tokens while relaxing criteria for visual-irrelevant ones, effectively overcoming the rigid token-level matching bottleneck. 
Ultimately, \method breaks the conventional acceleration ceiling of visual speculative decoding, providing the community with a practical foundation for more efficient and responsive video understanding in real-world applications.

\clearpage
\section{Limitations}
While our loosely speculative decoding framework yields substantial gains in mean accepted length and acceleration for Video-LLMs, several limitations merit consideration. First, since our approach is in a training-free manner, it may require tuning hyper-parameters to better balance output quality and speedup ratio when generalizing to tasks with substantially higher or lower visual information density. 
Second, our method primarily focuses on the mainstream visually descriptive captioning and QA tasks. For complex visual reasoning tasks that involve more logical tokens, mismatches that are logically important yet visually irrelevant may be potentially handled using a more careful strategy (e.g., further reducing $\lambda$ or adding auxiliary heuristics).
Third, the position shift-tolerant (\textsc{PST}) component is a bonus module tailored to the output patterns of existing video understanding tasks, relying on cases where the draft and target models exhibit positionally shifted outputs. However, since its impact is small, it can be enabled selectively in other settings.
Finally, exploring tighter integration with more draft models that are finetuned or specially trained for LVLMs~\citep{DBLP:journals/corr/abs-2509-15235,DBLP:journals/corr/abs-2509-23928}, and extending our framework to more visual understanding tasks are also left for future work.
\section{Ethical Considerations}
All experiments in this work are conducted using open-source datasets and models~\footnote{\url{https://huggingface.co/datasets/wchai/Video-Detailed-Caption} (Apache License 2.0)
}
\footnote{\url{https://huggingface.co/datasets/Enxin/MovieChat-1K-test} (BSD 3-Clause License)
}
\footnote{\url{https://huggingface.co/datasets/lmms-lab/Video-MME} (Creative Commons Attribution-ShareAlike 4.0 International License)
}
\footnote{\url{https://huggingface.co/datasets/lmms-lab/VideoDetailCaption} (CC-BY-4.0 License)
}.
Our research focuses solely on improving inference efficiency and does not involve any sensitive data, human subjects, or commercial use.


\bibliography{custom}
\clearpage
\appendix
\crefalias{section}{appendix}
\crefalias{subsection}{appendix}

\section{Theoretical Derivations for Section~\ref{sec:theoretical_analysis}}
\label{app:strict_proof}

\subsection{Geometric Acceptance Assumption}
\label{app:assumption}

\paragraph{Assumption}(Geometric Acceptance Dynamics).
The generation process is inherently autoregressive, where the alignment probability is conditioned on the prefix $P(\hat{y}_k = y_k \mid y_{<k})$. Modeling these step-wise dependencies renders the derivation intractable. Thus, for \textbf{theoretical tractability}, we simplify the analysis by approximating token correctness as independent and identically distributed (i.i.d.) events with a base alignment accuracy $\alpha \coloneqq P(\hat{y}_k = y_k)$. Consequently, the acceptance process follows a geometric distribution governed by the failure rate $\epsilon \coloneqq 1 - \alpha$.

\subsection{Proof of Proposition~\ref{prop:strict_bound}}
\label{app:proof_proposition_2.2}

\begin{proof}
We analyze the expected value of the accepted length $\tau^{\triangledown}_{\text{strict}}$, treating it as a discrete random variable defined over the domain $\{0, 1, \dots, K\}$.
Invoking the tail sum formula for the expectation of non-negative integer variables, we express $\mathbb{E}[\tau^{\triangledown}_{\text{strict}}]$ as the sum of cumulative probabilities:
\begin{equation}
\label{eq:P1}
\mathbb{E}[\tau^{\triangledown}_{\text{strict}}] = \sum_{k=1}^{K} P(\tau^{\triangledown}_{\text{strict}} \ge k).
\end{equation}
The event $\tau^{\triangledown}_{\text{strict}} \ge k$ is logically equivalent to the successful verification of the first $k$ consecutive tokens (i.e., the prefix of length $k$ matches). Under the i.i.d. assumption with alignment accuracy $\alpha$, this probability factorizes as:
\begin{equation}
\label{eq:P2}
P(\tau^{\triangledown}_{\text{strict}} \ge k) = \prod_{j=1}^{k} P(\hat{y}_j = y_j) = \alpha^k.
\end{equation}
Substituting~\Cref{eq:P2} into~\Cref{eq:P1} reveals a finite geometric series characterized by the first term $\alpha$ and the common ratio $\alpha$. Applying the standard closed-form summation formula, we derive the exact expectation:
\begin{equation}
\mathbb{E}[\tau^{\triangledown}_{\text{strict}}] = \sum_{k=1}^{K} \alpha^k = \frac{\alpha(1 - \alpha^K)}{1 - \alpha}.
\end{equation}
Crucially, since $\alpha \in (0, 1)$, this is a \textbf{series of strictly positive terms}. This implies that the partial sum is \textbf{strictly monotonically increasing} with respect to $K$. Consequently, for any finite draft size $K$, the expectation is strictly bounded by the infinite series sum (i.e., the partial sum is strictly less than the infinite sum):
\begin{equation}
\mathbb{E}[\tau^{\triangledown}_{\text{strict}}] < \sum_{k=1}^{\infty} \alpha^k = \frac{\alpha}{1 - \alpha}.
\end{equation}
We now relate this bound to the failure rate $\epsilon = 1 - \alpha$. By substituting $\alpha = 1 - \epsilon$, we derive the limit value:
\begin{equation}
\frac{\alpha}{1 - \alpha} = \frac{1 - \epsilon}{\epsilon} = \frac{1}{\epsilon} - 1.
\end{equation}
Observing that $\frac{1}{\epsilon} - 1$ is strictly less than $\frac{1}{\epsilon}$ (since $\epsilon > 0$), we conclude with the strict upper bound:
\begin{equation}
\mathbb{E}[\tau^{\triangledown}_{\text{strict}}] < \frac{1}{\epsilon}.
\end{equation}
This confirms that the performance of strictly SD is theoretically bottlenecked by the inverse of the draft model's failure rate.
\end{proof}

\subsection{Proof of Theorem~\ref{thm:sparsity_scaling}}
\label{app:proof_theorem_2.3}

\begin{proof}
Under the proposed mechanism, tokens in $\mathcal{A}$ are accepted with probability $\approx 1$ (conditioned on syntactic plausibility), while tokens in $\mathcal{V}$ retain the strict acceptance probability $\alpha$. This transforms the uniform alignment probability $\alpha$ into an \textbf{effective alignment rate} $\tilde{\alpha}$, defined as the weighted average over the sequence:
\begin{equation}
\tilde{\alpha} \coloneqq \underbrace{\rho \cdot \alpha}_{\triangledown_\text{strict} \text{ for } \mathcal{V}} + \underbrace{(1 - \rho) \cdot 1}_{\triangledown_\text{loose} \text{ for } \mathcal{V}'} = 1 - \rho(1 - \alpha).
\end{equation}
Analogous to the derivation in Proposition~\ref{prop:strict_bound}, the expected accepted length follows the geometric expectation formula. By expanding the algebraic term, we establish a strict upper bound:
\begin{equation}
\begin{aligned}
\mathbb{E}[\tau^{\triangledown}_{\text{loose}}] &= \sum_{k=1}^{K} \tilde{\alpha}^k \approx \frac{\tilde{\alpha}}{1 - \tilde{\alpha}} \\
&= \frac{1 - \rho(1 - \alpha)}{1 - (1 - \rho(1 - \alpha))} \\
&= \frac{1 - \rho\epsilon}{\rho\epsilon} = \frac{1}{\rho\epsilon} - 1 < \frac{1}{\rho\epsilon}.
\end{aligned}
\end{equation}
This approximation holds because the visual density is sparse ($\rho \ll 1$), implying that the term $\frac{1}{\rho\epsilon}$ dominates the subtractive constant. 
Comparing this to the strict expectation limit $\mathbb{E}[\tau^{\triangledown}_{\text{strict}}] \approx \frac{1}{\epsilon}$, we derive the scaling ratio:

\begin{equation}
\frac{\mathbb{E}[\tau^{\triangledown}_{\text{loose}}]}{\mathbb{E}[\tau^{\triangledown}_{\text{strict}}]} \approx \frac{1/\rho\epsilon}{1/\epsilon} = \frac{1}{\rho}.
\end{equation}
This concludes the proof.
\end{proof}

\begin{table*}[!t]
  \centering
  \setlength{\tabcolsep}{4.5pt}
  \renewcommand{\arraystretch}{1.15}
  \begin{tabular}{@{}l l l c c@{}}
    \toprule
    \textbf{Benchmark} & \textbf{Type} & \textbf{Output} & \textbf{Frames} & \textbf{\#Instances} \\
    \midrule
    VDC~\cite{DBLP:conf/iclr/ChaiSDMMBHXM25}       & Captioning & Long   & 64  & 120 \\
    VDD~\cite{vdc2024}                            & Captioning & Long   & 128 & 30  \\
    MovieChat~\cite{DBLP:conf/cvpr/SongCWZZWCG0ZLH24}            & QA         & Medium & 128 & 100 \\
    Video-MME~\cite{DBLP:conf/cvpr/FuDLLRZWZSZCLLZ25}                  & QA         & Short  & 64  & 500 \\
    \bottomrule
  \end{tabular}
  \caption{Benchmark details.}
  \label{tab:benchmark_detail}
\end{table*}


\section{Benchmark Details}
\label{app:benchmark_detail}
For comprehensive evaluation on video understanding, we conduct experiments on both video captioning and video question answering tasks, where the output length ranges from a few tokens to long captions.
All evaluations follow the standard implementation of \texttt{LMMs-Eval}~\cite{lmms_eval2024} and use Large Language Models as judges. Prompt template aligns with step 3 in~\Cref{fig:prompt_template}.
The video captioning benchmarks include 
\ding{172} Video Detail Caption (VDC)~\cite{DBLP:conf/iclr/ChaiSDMMBHXM25}, which covers main objects, backgrounds, details, and camera tests, as well as  \ding{173} Video Detail Description (VDD)~\cite{vdc2024}, which evaluates holistic video-level descriptions.
The video question answering benchmarks include \ding{174} MovieChat~\cite{DBLP:conf/cvpr/SongCWZZWCG0ZLH24}, which evaluates open-ended question answering over videos, and \ding{175} Video-MME~\cite{DBLP:conf/cvpr/FuDLLRZWZSZCLLZ25}, which assesses multiple-choice question answering.
These benchmarks span diverse task types, video lengths, and output lengths, enabling a comprehensive evaluation of both acceleration and performance on video understanding.
We summarize the generation length, number of frames sampled, and number of instances of each benchmark in~\Cref{tab:benchmark_detail}.

\section{Baseline Details}
\label{app:baseline_detail}
For our re-implementation of \textsc{Fly}~\cite{li2025looselysd}, we set the size of the deferred window $\mathrm{Window}=4$, matching the ratio of $\mathrm{Window}$ to draft length used in the original paper. 
According to model-specific properties, we set the gate threshold of entropy $\mathrm{Entropy}=0.1$ for \texttt{Qwen2.5-VL-32B} and $\mathrm{Entropy}=0.2$ for the other models. The threshold is chosen to induce a relaxation level comparable to that of \method.
In~\Cref{sec:ablation_effect_of_visual}, we evaluate more hyperparameter configurations of \textsc{FLy} and $\textsc{FLy}^{\ominus}$, and \method consistently remains superior in both speedup and performance (see~\Cref{fig:effective}).

For the implementation of \textsc{SpecVLM}~\cite{ji2025specvlm}, we follow the original paper and adopt the same tree structure and pruning ratio $=0.9$ for visual tokens.
The visual tokens are uniformly pruned, and the same attention implementation (scaled-dot-product-attention) is ensured.

\begin{table*}[!t]
  \centering
  \setlength{\tabcolsep}{6pt}
  \renewcommand{\arraystretch}{1.0}
  \begin{tabular}{@{}l|cc|cc@{}}
    \toprule
    \multirow{2}{*}{Metric} &
    \multicolumn{2}{c|}{\texttt{Std.-SD-Qwen2.5-VL}} &
    \multicolumn{2}{c}{\texttt{Std.-SD-LLaVA-OV}} \\
    & w/o Draft Tree & w/ Draft Tree
    & w/o Draft Tree & w/ Draft Tree \\
    \midrule
    Speedup & 2.70$\times$ & 2.97$\times$ & 2.94$\times$ & 3.30$\times$ \\
    Retention & 99.6\% & 95.5\% & 99.8\% & 96.5\% \\
    \boldmath$\tau$ & 7.76 & 8.94 & 7.34 & 8.46 \\
    \bottomrule
  \end{tabular}
  \caption{Experiment of \method with draft token trees on the VDC benchmark.}
  \label{tab:draft_tree}
\end{table*}


\section{Exploration on Draft Model Sizes}
\label{app:draft_model_size}

\begin{figure}[t]
  \centering
  \includegraphics[width=0.48\textwidth]{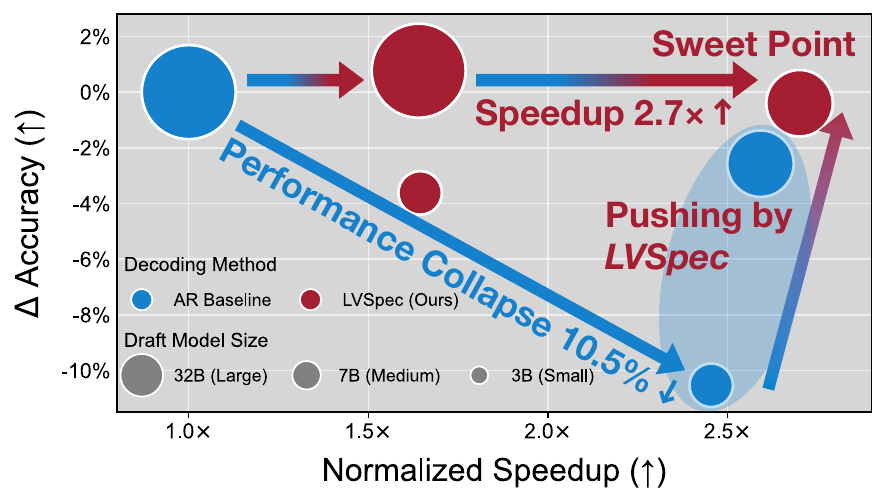}
  \caption{Evaluation of inference efficiency and generation quality. We plot the $\Delta$ Accuracy against the speedup ratio on the VDC task. Bubble sizes correspond to the draft model scales (3B, 7B, and 32B) within the Qwen2.5-VL family. The proposed LVSpec (red bubbles) demonstrates robust performance scaling, effectively mitigating the accuracy loss observed in the autoregressive decoding (AR) baseline (blue bubbles) even when using weaker draft models for SD.}
  \label{fig:draft_model_size}
\end{figure}

To assess the generalization of \method under different draft model capacities, we experiment with both larger draft model \texttt{Qwen2.5-VL-32B}, medium draft model \texttt{Qwen2.5-VL-7B}, and smaller draft model \texttt{Qwen2.5-VL-3B}, fixing \texttt{Qwen2.5-VL-32B} as the target model on the VDC task, as shown in~\Cref{fig:draft_model_size}. For draft model \texttt{Qwen2.5-VL-32B} and \texttt{Qwen2.5-VL-7B}, we keep $\lambda=0.7$, and \method still maintains nearly lossless accuracy value ($+0.77\%$ and $-0.11\%$) while achieving a speedup ratio of $1.64\times$ and $2.70\times$ . 
For draft model \texttt{Qwen2.5-VL-3B}, we adopt a smaller loose parameter $\lambda=0.3$ and $N=5$ since the draft quality degrades substantially. Even so, the results demonstrate that \method can adaptively preserve accuracy across different model pairings, while delivering meaningful acceleration.

\section{Exploration on Draft Token Trees}
\label{app:draft_tree}


\method operates on the verification mechanism and is compatible with the draft tree technique commonly used in speculative decoding~\citep{DBLP:conf/icml/LiW0024,DBLP:conf/emnlp/LiW0024,DBLP:conf/asplos/MiaoOZCWZWZYSSC24}. We adopt a simple static tree with two branches (each with draft tokens $K=10$) and perform loose verification along each branch, finally selecting the branch with the longest accepted length. \Cref{tab:draft_tree} shows that incorporating the tree structure further improves the speedup from 2.70$\times$ and 2.94$\times$ to 2.97$\times$ and 3.30$\times$. However, it in turn increases the level of relaxation, leading to a modest loss in accuracy. We leave the question of how to balance acceptance and speedup across multiple tree branches as future work.

\section{Computation Breakdown}
\begin{figure}[!t]
  \centering
  \includegraphics[width=0.48\textwidth]{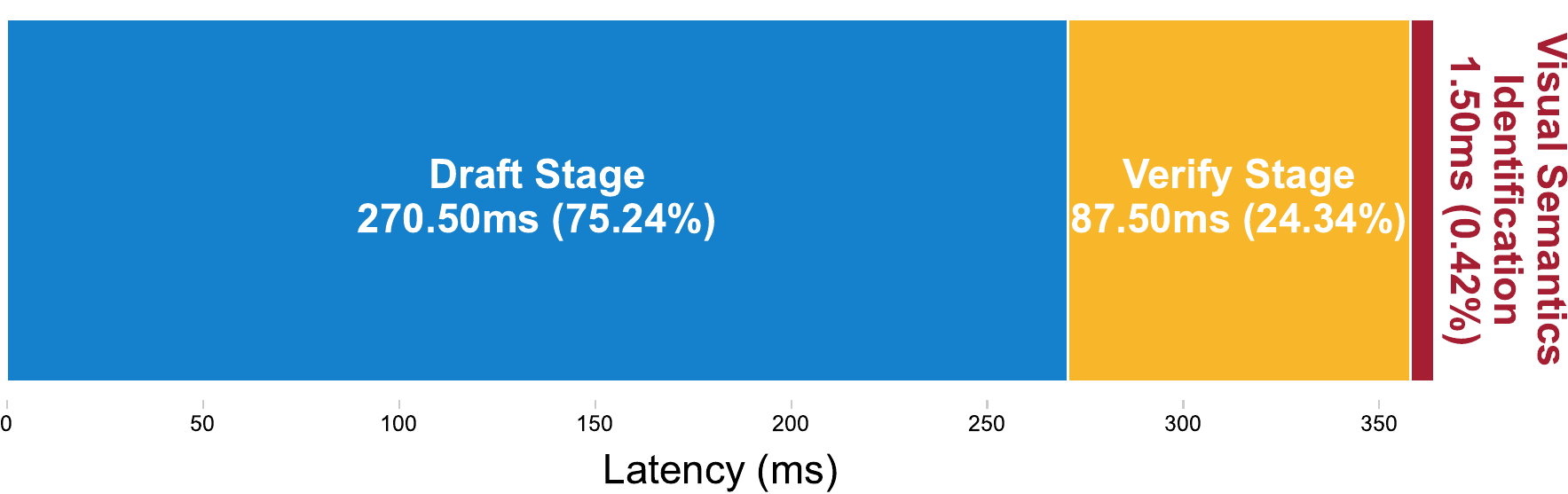}
  \caption{The computational overhead of \method. Latency is tested using \texttt{Std.-SD-Qwen2.5-VL} and VDC task, on two NVIDIA H200 GPUs.}
  \label{fig:computation_breakdown}
\end{figure}
In each decoding step, \method requires computing the visual relevance of text tokens. We report the overhead of this computation in~\Cref{fig:computation_breakdown}. Compared with the cost of drafting and verification, the visual relevance computation accounts for only 0.42\% of the runtime, indicating minimal impact on end-to-end latency. 
Since the cost of computing cosine similarity scales only linearly with the sequence length, this overhead does not become larger relative to the computation of other components as the video length increases.


\section{More Experimental Results}
\label{app:results}
\paragraph{Performance retention results.} In~\Cref{fig:accuracy}, we show the retention results of the performance metric (accuracy and score) for different tasks and model settings. \Cref{tab:accuracy} provides the raw data.
\paragraph{More comparison results with \textsc{SpecVLM}.}
In the main experiment (\Cref{tab:speedup}), we compare \method with \textsc{SpecVLM}~\cite{ji2025specvlm} equipped with the same draft tree as in the original paper.
Since the prior loosely SD method~\cite{li2025looselysd} aims to increase the mean accepted length, it by default use a long chain-structured draft sequence without forming a draft tree. 
We therefore conduct experiments that compare \method against \textsc{SpecVLM} under the same draft sequence ($K=10$) in~\Cref{tab:specvlm_without_tree}.
Here, \method and \textsc{SpecVLM} ($K=10$) differ only in the verification criterion. The comparison shows that: (i) \method surpasses \textsc{SpecVLM} with the same draft structure in both mean accepted length ($+128\%$) and speedup ($+93\%$) solely by loosing the verification. (ii) \textsc{SpecVLM}’s acceleration relies heavily on the draft tree structure, whereas \method breaks this dependency and is able to accept more tokens with fewer draft nodes, showing potential for larger batch sizes. (iii) The empirical results on mean accepted length closely align with our theoretical analysis in ~\Cref{sec:theoretical_analysis}. Specifically, \method uses $\lambda=0.7$ to approximate the setting with visual density $\rho=0.3$ in~\Cref{thm:sparsity_scaling}. When \textsc{SpecVLM} ($K=10$) attains $\tau=3.41$, corresponding to a failure rate of 65.9\%, \method achieves a failure rate of $1 - 7.76/10 = 22.4\%$, which closely matches the diluted estimate $65.9\% \times 0.3 = 19.8\%$.

\section{Details of Oracle Study}
\label{app:oracle}
As mentioned in~\Cref{sec:empirical_sparsity}, we observe that generation quality is governed by a sparse set of “anchor” tokens, whereas high-frequency irrelevant tokens carry minimal information density. A detailed prompt template is provided in~\Cref{fig:prompt_template} and an illustration example is provided in~\Cref{fig:case_study_oracle}.

\section{Potential Improvements to \method}
\label{app:potential_improvement}
Although the loose verification strategy of \method demonstrates effective performance across different model settings and benchmarks, it is still necessary to discuss the potential improvements of \method.
We acknowledge that more complex visual reasoning tasks may involve tokens that are visually irrelevant yet logically critical (e.g., ``not''), and relaxing the verification of those tokens potentially decreases performance.
We discuss this case from the following perspectives: (i) Our method primarily aims to explore the performance boundary of loosely speculative decoding on mainstream video understanding tasks, and to provide a empirically validated solution that is effective under common model settings. 
Although such cases can arise in theory, the proportion of visually irrelevant but harmful draft tokens is small in descriptive output patterns, and the impact on overall performance is limited according to empirical results (\Cref{fig:accuracy} and~\Cref{tab:accuracy}).
(ii) Potential failure cases are a shared issue in loosely speculative decoding.
JudgeDecoding~\cite{BachmannAPGSDST25} can degrade in quality when encountering out-of-domain inputs or output patterns unseen during training. FLy~\cite{li2025looselysd} treats a current mismatch as a synonym if it is followed by several matches, which may inadvertently accept drafts that preserve superficial structural consistency but are nonetheless lower-quality than the target model’s output.
(iii) When the output patterns do contain substantial visually irrelevant yet logically critical tokens, improvements can be made upon \method. First, we can augment the visual relevance metric with additional heuristics to identify non-visual keywords. This would not undermine the role of the visual relevance metric since visually grounded tokens still remain critical. Second, when such tokens cannot be reliably identified, a simple fallback to strict verification can be used to avoid performance degradation.

With the development of complex video reasoning~\cite{fei2024video,han2025videoespresso,zhang2025mmesgbench}, incorporating adaptive video token pruning~\cite{liu2025vidcom2,zhang2026efficientinferencelargevisionlanguage}, KV cache compression~\cite{liu2025mixkv,zeng2026hybridkvhybridkvcache,zhang2026efficientinferencelargevisionlanguage}, and in-context draft capability~\cite{gao2025aim} are also promising directions.

\section{LLM Usage}
Large Language Models (LLMs) were used to aid in the code writing and manuscript polishing.
Specifically, the usage includes refining the language, improving readability, and ensuring clarity in the paper.
It is important to note that LLMs were not involved in the ideation, research methodology, or experimental design.

\begin{table*}[!ht]
  \centering
  \setlength{\tabcolsep}{6pt}
  \renewcommand{\arraystretch}{1.15}
  \resizebox{\textwidth}{!}{
  \begin{tabular}{c|l|cc|c|cc|cc}
    \toprule
    \multirow{2}{*}{\textbf{Model}} & \multirow{2}{*}{\textbf{Method}} &
    \multicolumn{2}{c|}{\textbf{VDC}} &
    \textbf{VDD} &
    \multicolumn{2}{c|}{\textbf{MovieChat}} &
    \multicolumn{2}{c}{\textbf{VideoMME}} \\
    & & Score & Acc(\%) & Score & Score & Acc(\%) & Acc$_{short}$(\%) & Acc$_{long}$(\%) \\
    \midrule

    \multirow{4}{*}{\cellcolor{white}\rotatebox{90}{\makecell[c]{\ttfamily Std.-SD\\ \ttfamily Qwen2.5-VL}}}
    & \gray{Autoregressive}  &
    \gray{1.93}   & \gray{31.27} &
    \gray{3.98}   &
    \gray{2.95}   & \gray{54.0}  &
    \gray{67.6}   & \gray{61.8} \\
    & $\textsc{FLy}^{\ominus}$ &
    1.87 & 29.82 &
    3.46 &
    3.11 & 57.0 &
    67.6 & 61.8 \\
    & \textsc{FLy} &
    1.94 &  30.90 &
    3.61 &
    2.82 & 50.0 &
    67.6 & 61.8 \\
    & \hanredcell{\method (Ours)} &
    \hanredcell{1.92} & \hanredcell{31.16} &
    \hanredcell{3.71} &
    \hanredcell{3.06} & \hanredcell{59.0}  &
    \hanredcell{67.6} & \hanredcell{61.8} \\
    \hline

    \multirow{4}{*}{\cellcolor{white}\rotatebox{90}{\makecell[c]{\ttfamily Self-SD\\ \ttfamily Qwen2.5-VL}}}
    & \gray{Autoregressive}  &
    \gray{1.80} & \gray{28.70} &
    \gray{3.55} &
    \gray{3.08} & \gray{57.0}  &
    \gray{60.6} & \gray{59.0} \\
    & $\textsc{FLy}^{\ominus}$ &
    1.77 & 27.46 &
    3.70 &
    2.90 & 51.0 &
    60.6 & 59.0 \\
    & \textsc{FLy} &
    1.84 & 29.81 &
    3.63 &
    3.12 & 58.0 &
    60.6 & 59.0 \\
    & \hanredcell{\method (Ours)} &
    \hanredcell{1.82} & \hanredcell{27.97} &
    \hanredcell{3.87} &
    \hanredcell{3.04} & \hanredcell{57.0}  &
    \hanredcell{60.6} & \hanredcell{59.0} \\
    \hline

    \multirow{4}{*}{\cellcolor{white}\rotatebox{90}{\makecell[c]{\ttfamily Std.-SD\\ \ttfamily LLaVA-OV}}}
    & \gray{Autoregressive}  &
    \gray{1.84} & \gray{28.66} &
    \gray{3.50} &
    \gray{2.60} & \gray{43.0}  &
    \gray{75.0} & \gray{65.8} \\
    & $\textsc{FLy}^{\ominus}$ &
    1.87 & 29.12 &
    3.53 &
    2.68 & 48.0 &
    75.0 & 65.8 \\
    & \textsc{FLy} &
    1.79 & 27.53 &
    3.03 &
    2.63 & 43.0 &
    75.0 & 65.8 \\
    & \hanredcell{\method (Ours)} &
    \hanredcell{1.83} & \hanredcell{28.61} &
    \hanredcell{3.46} &
    \hanredcell{2.53} & \hanredcell{45.0}  &
    \hanredcell{75.0} & \hanredcell{65.8} \\
    \hline

    \multirow{4}{*}{\cellcolor{white}\rotatebox{90}{\makecell[c]{\ttfamily Self-SD\\ \ttfamily LLaVA-OV}}}
    & \gray{Autoregressive}  &
    \gray{1.79} & \gray{27.70} &
    \gray{3.57} &
    \gray{2.67} & \gray{48.0}  &
    \gray{67.0} & \gray{54.8} \\
    & $\textsc{FLy}^{\ominus}$ &
    1.84 & 28.83 &
    3.50 &
    2.58 & 47.0 &
    67.0 & 54.8 \\
    & \textsc{FLy} &
    1.87 & 28.90 &
    3.63 &
    2.61 & 47.0 &
    67.0 & 54.8 \\
    & \hanredcell{\method (Ours)} &
    \hanredcell{1.83} & \hanredcell{28.38} &
    \hanredcell{3.50} &
    \hanredcell{2.83} & \hanredcell{51.0}  &
    \hanredcell{67.0} & \hanredcell{54.8} \\
    \bottomrule

  \end{tabular}
  }
  \caption{
  Raw data of performance retention results in~\Cref{fig:accuracy}. The overall retention ratio~in~\Cref{fig:accuracy} is computed on average of each performance metric.
  }
  \label{tab:accuracy}
\end{table*}

\setlength{\tabcolsep}{6pt}          
\renewcommand{\arraystretch}{1.5}   
\setlength{\extrarowheight}{1.2pt}   

\begin{table*}[t]
  \centering
  \resizebox{\textwidth}{!}{
  \begin{tabular}{l|l|ccc|ccc|ccc}
    \toprule
    \multicolumn{1}{l|}{\multirow{2}{*}{\textbf{Model}}} &
    \multicolumn{1}{l|}{\multirow{2}{*}{\textbf{Method}}} &
    \multicolumn{3}{c|}{\textbf{Video Detail Caption}} &
    \multicolumn{3}{c|}{\textbf{Video Detail Description}} &
    \multicolumn{3}{c}{\textbf{MovieChat}} \\
    & &
    $\tau$ & Token/s & Speedup &
    $\tau$ & Token/s & Speedup &
    $\tau$ & Token/s & Speedup \\
    \midrule

    \multirow{3}{*}{\rotatebox{90}{\makecell[c]{\ttfamily Std.-SD \\ \ttfamily Qwen2.5-VL}}}
    & \textsc{SpecVLM} ($K=10$)
      & 3.41 & 11.15 & 1.40$\times$
      & 3.33 & 11.85 & 1.16$\times$
      & 3.02 & 10.68 & 1.09$\times$ \\
    & \textsc{SpecVLM}
      & 3.29 & 15.90 & 2.00$\times$
      & 3.31 & 17.96 & 1.76$\times$
      & 3.19 & 17.05 & 1.74$\times$ \\
    & \hanredcell{\method (Ours)}
      & \hanredcell{\textbf{7.76}} & \hanredcell{\textbf{21.47}} & \hanredcell{\textbf{2.70$\times$}}
      & \hanredcell{\textbf{7.68}} & \hanredcell{\textbf{23.58}} & \hanredcell{\textbf{2.34$\times$}}
      & \hanredcell{\textbf{7.74}} & \hanredcell{\textbf{21.49}} & \hanredcell{\textbf{2.19$\times$}} \\
    \hline

    \multirow{3}{*}{\rotatebox{90}{\makecell[c]{\ttfamily Self-SD \\ \ttfamily Qwen2.5-VL}}}
    & \textsc{SpecVLM} ($K=10$)
      & 5.31 & 25.99 & 1.26$\times$
      & 5.49 & 26.03 & 1.05$\times$
      & 4.63 & 21.77 & 0.85$\times$ \\
    & \textsc{SpecVLM}
      & 3.85 & 31.45 & 1.53$\times$
      & 3.89 & 35.00 & 1.41$\times$
      & 3.66 & 32.20 & 1.26$\times$ \\
    & \hanredcell{\method (Ours)}
      & \hanredcell{\textbf{8.36}} & \hanredcell{\textbf{36.49}} & \hanredcell{\textbf{1.77$\times$}}
      & \hanredcell{\textbf{8.64}} & \hanredcell{\textbf{39.32}} & \hanredcell{\textbf{1.59$\times$}}
      & \hanredcell{\textbf{7.99}} & \hanredcell{\textbf{34.04}} & \hanredcell{\textbf{1.33$\times$}} \\
    \hline

    \multirow{3}{*}{\rotatebox{90}{\makecell[c]{\ttfamily Std.-SD \\ \ttfamily LLaVA-OV}}}
    & \textsc{SpecVLM} ($K=10$)
      & 3.66 & 11.61 & 1.75$\times$
      & 4.02 & 10.51 & 1.75$\times$
      & 4.74 & 5.49  & 1.24$\times$ \\
    & \textsc{SpecVLM}
      & 3.37 & 15.75 & 2.38$\times$
      & 3.53 & 14.94 & 2.49$\times$
      & 3.77 & 8.45  & 1.92$\times$ \\
    & \hanredcell{\method (Ours)}
      & \hanredcell{\textbf{7.34}} & \hanredcell{\textbf{19.44}} & \hanredcell{\textbf{2.94$\times$}}
      & \hanredcell{\textbf{7.59}} & \hanredcell{\textbf{17.45}} & \hanredcell{\textbf{2.91$\times$}}
      & \hanredcell{\textbf{8.10}} & \hanredcell{\textbf{12.16}} & \hanredcell{\textbf{2.75$\times$}} \\
    \hline

    \multirow{3}{*}{\rotatebox{90}{\makecell[c]{\ttfamily Self-SD \\ \ttfamily LLaVA-OV}}}
    & \textsc{SpecVLM} ($K=10$)
      & 5.47 & 26.27 & 0.99$\times$
      & 5.57 & 22.61 & 1.28$\times$
      & 6.98 & 14.16 & 0.82$\times$ \\
    & \textsc{SpecVLM}
      & 3.91 & 35.46 & 1.34$\times$
      & 3.95 & 29.08 & 1.65$\times$
      & 4.17 & \textbf{21.19} & \textbf{1.23$\times$} \\
    & \hanredcell{\method (Ours)}
      & \hanredcell{\textbf{8.46}} & \hanredcell{\textbf{36.40}} & \hanredcell{\textbf{1.38$\times$}}
      & \hanredcell{\textbf{8.87}} & \hanredcell{\textbf{34.96}} & \hanredcell{\textbf{1.98$\times$}}
      & \hanredcell{\textbf{8.77}} & \hanredcell{20.28} & \hanredcell{1.18$\times$} \\
    \bottomrule
  \end{tabular}
  }
  \caption{
  Speedup ratios and mean accepted length $\tau$. \textsc{SpecVLM} ($K=10$) and \method adopt identical draft structure for fair comparison, while \textsc{SpecVLM} adopts a draft tree as in its original paper for optimization ($depth=5$). Our \method outperforms \textsc{SpecVLM} ($K=10$) by up to 93\% and \textsc{SpecVLM} by up to 43\% on speedup ratio.
  }
  \label{tab:specvlm_without_tree}
\end{table*}
\begin{figure*} [t!] 
\centering
    \includegraphics[scale=1.0]{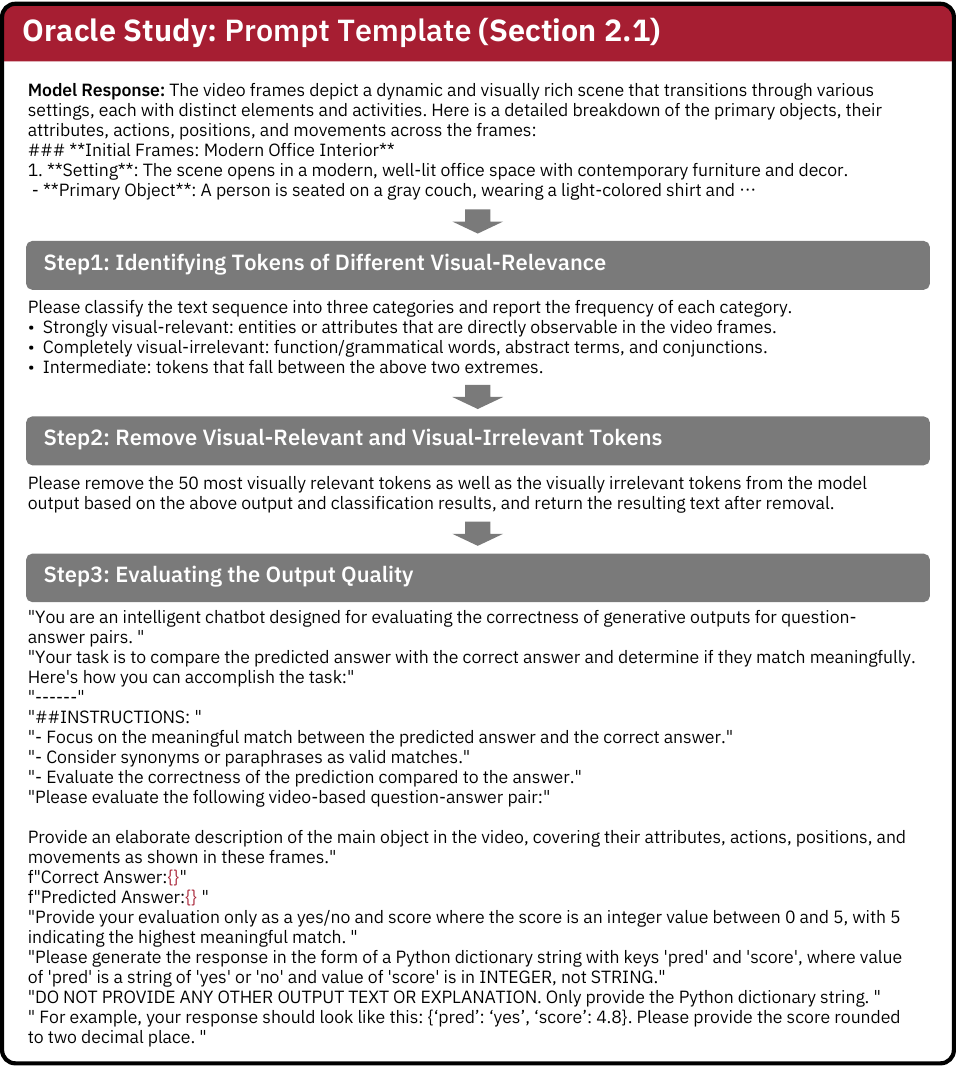}
    \caption{
    Prompt template of the oracle study in~\Cref{sec:empirical_sparsity}.
    }
    \label{fig:prompt_template}
\end{figure*}

\begin{figure*} [t!] 
\centering
    \includegraphics[scale=1.0]{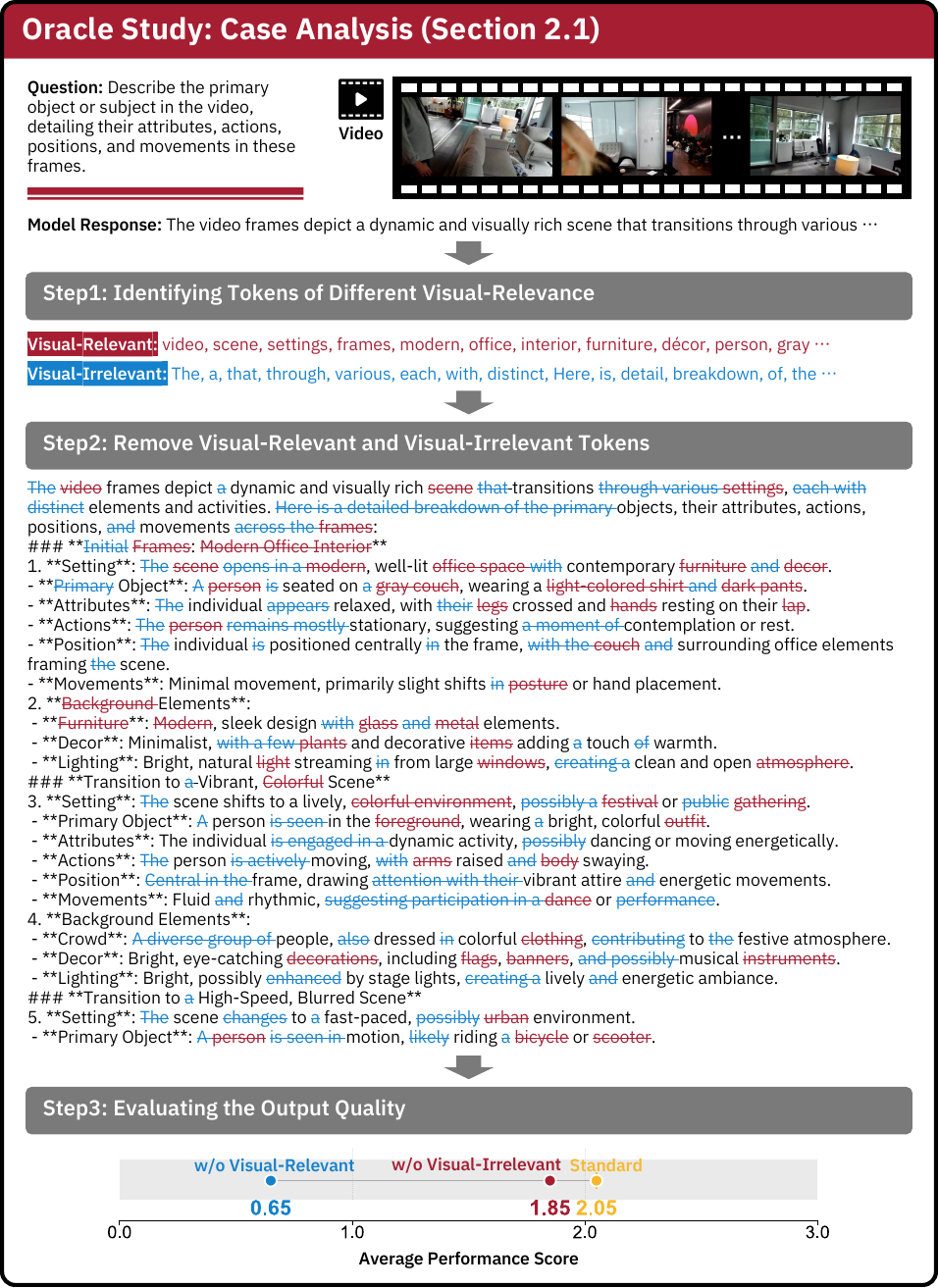}
    \caption{
    An example of the oracle study in~\Cref{sec:empirical_sparsity}.
    }
    \label{fig:case_study_oracle}
\end{figure*}

\end{document}